\documentclass[twoside]{article}

%
\usepackage[accepted]{aistats2e}
%

\usepackage{amsthm}
\usepackage{amssymb}
\setcounter{tocdepth}{3}
\usepackage{graphicx}
\usepackage{times}
\usepackage{subfigure}
\usepackage{color}
\usepackage{amsbsy}
\usepackage{amsmath}
\usepackage{amsfonts}
\usepackage{epsfig}
\usepackage{wrapfig}

\definecolor{rltred}{rgb}{0.5,0,0}
\definecolor{rltgreen}{rgb}{0,0.5,0}
\definecolor{rltblue}{rgb}{0,0,0.5}

\begin{document}

%

%

\twocolumn[

\aistatstitle{A Last-Step Regression Algorithm for Non-Stationary Online Learning}

\aistatsauthor{ Edward Moroshko \And Koby Crammer }


\aistatsaddress{ Department of Electrical Engineering,\\ The Technion, Haifa, Israel \And Department of Electrical Engineering,\\ The Technion, Haifa, Israel} ]

%

\newtheorem{theorem}{Theorem}
\newtheorem{lemma}[theorem]{Lemma}
\newtheorem{definition}[theorem]{Definition}
\newtheorem{claim}[theorem]{Claim}
\newtheorem{corollary}[theorem]{Corollary}

\def\proofsketch{\par\penalty-1000\vskip .5 pt\noindent{\bf Proof sketch\/: }}
\def\ProofSketch{\par\penalty-1000\vskip .1 pt\noindent{\bf Proof sketch\/: }}
\newcommand{\QED}{\hfill$\;\;\;\rule[0.1mm]{2mm}{2mm}$\\}

\newcommand{\todo}[1]{{~\\\bf TODO: {#1}}~\\}

\newfont{\msym}{msbm10}
\newcommand{\reals}{\mathbb{R}}
\newcommand{\half}{\frac{1}{2}}
\newcommand{\sign}{{\rm sign}}
\newcommand{\paren}[1]{\left({#1}\right)}
\newcommand{\brackets}[1]{\left[{#1}\right]}
\newcommand{\braces}[1]{\left\{{#1}\right\}}
\newcommand{\ceiling}[1]{\left\lceil{#1}\right\rceil}
\newcommand{\abs}[1]{\left\vert{#1}\right\vert}
\newcommand{\tr}{{\rm Tr}}
\newcommand{\pr}[1]{{\rm Pr}\left[{#1}\right]}
\newcommand{\prp}[2]{{\rm Pr}_{#1}\left[{#2}\right]}
\newcommand{\Exp}[1]{{\rm E}\left[{#1}\right]}
\newcommand{\Expp}[2]{{\rm E}_{#1}\left[{#2}\right]}
\newcommand{\eqdef}{\stackrel{\rm def}{=}}
\newcommand{\comdots}{, \ldots ,}
\newcommand{\true}{\texttt{True}}
\newcommand{\false}{\texttt{False}}
\newcommand{\mcal}[1]{{\mathcal{#1}}}
\newcommand{\argmin}[1]{\underset{#1}{\mathrm{argmin}} \:}
\newcommand{\normt}[1]{\left\Vert {#1} \right\Vert^2}
\newcommand{\step}[1]{\left[#1\right]_+}
\newcommand{\1}[1]{[\![{#1}]\!]}
\newcommand{\diag}{{\textrm{diag}}}
\newcommand{\KL}{{\textrm{D}_{\textrm{KL}}}}
\newcommand{\IS}{{\textrm{D}_{\textrm{IS}}}}
\newcommand{\EU}{{\textrm{D}_{\textrm{EU}}}}

\newcommand{\leftmarginpar}[1]{\marginpar[#1]{}}
\newcommand{\figline}{\rule{0.45\textwidth}{0.5pt}}
\newcommand{\figlinee}[1]{\rule{#1\textwidth}{0.5pt}}
\newcommand{\pseudocodefont}{\normalsize}
\newcommand{\nolineskips}{
\setlength{\parskip}{0pt}
\setlength{\parsep}{0pt}
\setlength{\topsep}{0pt}
\setlength{\partopsep}{0pt}
\setlength{\itemsep}{0pt}}

\newcommand{\beq}[1]{\begin{equation}\label{#1}}
\newcommand{\eeq}{\end{equation}}
\newcommand{\beqa}{\begin{eqnarray}}
\newcommand{\eeqa}{\end{eqnarray}}
\newcommand{\secref}[1]{Sec.~\ref{#1}}
\newcommand{\figref}[1]{Fig.~\ref{#1}}
\newcommand{\exmref}[1]{Example~\ref{#1}}
\newcommand{\thmref}[1]{Thm.~\ref{#1}}
\newcommand{\sthmref}[1]{Thm.~\ref{#1}}
\newcommand{\defref}[1]{Definition~\ref{#1}}
\newcommand{\remref}[1]{Remark~\ref{#1}}
\newcommand{\chapref}[1]{Chapter~\ref{#1}}
\newcommand{\appref}[1]{App.~\ref{#1}}
\newcommand{\lemref}[1]{Lem.~\ref{#1}}
\newcommand{\propref}[1]{Proposition~\ref{#1}}
\newcommand{\claimref}[1]{Claim~\ref{#1}}

\newcommand{\corref}[1]{Corollary~\ref{#1}}
\newcommand{\scorref}[1]{Cor.~\ref{#1}}
\newcommand{\tabref}[1]{Table~\ref{#1}}
\newcommand{\tran}[1]{{#1}^{\top}}
\newcommand{\norm}{\mcal{N}}
\newcommand{\eqsref}[1]{Eqns.~(\ref{#1})}

\newcommand{\mb}[1]{{\boldsymbol{#1}}}
\newcommand{\up}[2]{{#1}^{#2}}
\newcommand{\dn}[2]{{#1}_{#2}}
\newcommand{\du}[3]{{#1}_{#2}^{#3}}
\newcommand{\textl}[2]{{$\textrm{#1}_{\textrm{#2}}$}}

\newcommand{\lf}{\lambda_{F}}

\newcommand{\vx}{\mathbf{x}}
\newcommand{\vxi}[1]{\vx_{#1}}
\newcommand{\vxii}{\vxi{t}}

\newcommand{\yi}[1]{y_{#1}}
\newcommand{\yii}{\yi{t}}
\newcommand{\hyi}[1]{\hat{y}_{#1}}
\newcommand{\hyii}{\hyi{i}}

\newcommand{\vy}{\mb{y}}
\newcommand{\vyi}[1]{\vy_{#1}}
\newcommand{\vyii}{\vyi{i}}

\newcommand{\vn}{\mb{\nu}}
\newcommand{\vni}[1]{\vn_{#1}}
\newcommand{\vnii}{\vni{i}}

\newcommand{\vmu}{\mb{\mu}}
\newcommand{\vmus}{{\vmu^*}}
\newcommand{\vmuts}{{\vmus}^{\top}}
\newcommand{\vmui}[1]{\vmu_{#1}}
\newcommand{\vmuii}{\vmui{i}}

\newcommand{\vmut}{\vmu^{\top}}
\newcommand{\vmuti}[1]{\vmut_{#1}}
\newcommand{\vmutii}{\vmuti{i}}

\newcommand{\vsigma}{\mb \sigma}
\newcommand{\msigma}{\Sigma}
\newcommand{\msigmas}{{\msigma^*}}
\newcommand{\msigmai}[1]{\msigma_{#1}}
\newcommand{\msigmaii}{\msigmai{t}}

\newcommand{\mups}{\Upsilon}
\newcommand{\mupss}{{\mups^*}}
\newcommand{\mupsi}[1]{\mups_{#1}}
\newcommand{\mupsii}{\mupsi{i}}
\newcommand{\upssl}{\upsilon^*_l}

\newcommand{\vu}{\mathbf{u}}
\newcommand{\vut}{\tran{\vu}}
\newcommand{\vui}[1]{\vu_{#1}}
\newcommand{\vuti}[1]{\vut_{#1}}
\newcommand{\hvu}{\hat{\vu}}
\newcommand{\hvut}{\tran{\hvu}}
\newcommand{\hvur}[1]{\hvu_{#1}}
\newcommand{\hvutr}[1]{\hvut_{#1}}
\newcommand{\vw}{\mb{w}}
\newcommand{\vwi}[1]{\vw_{#1}}
\newcommand{\vwii}{\vwi{t}}
\newcommand{\vwti}[1]{\vwt_{#1}}
\newcommand{\vwt}{\tran{\vw}}

\newcommand{\tvw}{\tilde{\mb{w}}}
\newcommand{\tvwi}[1]{\tvw_{#1}}
\newcommand{\tvwii}{\tvwi{t}}

\newcommand{\vv}{\mb{v}}
\newcommand{\vvt}{\tran{\vv}}

\newcommand{\vvi}[1]{\vv_{#1}}
\newcommand{\vvti}[1]{\vvt_{#1}}
\newcommand{\lambdai}[1]{\lambda_{#1}}
\newcommand{\Lambdai}[1]{\Lambda_{#1}}

\newcommand{\vxt}{\tran{\vx}}
\newcommand{\hvx}{\hat{\vx}}
\newcommand{\hvxi}[1]{\hvx_{#1}}
\newcommand{\hvxii}{\hvxi{i}}
\newcommand{\hvxt}{\tran{\hvx}}
\newcommand{\hvxti}[1]{\hvxt_{#1}}
\newcommand{\hvxtii}{\hvxti{i}}
\newcommand{\vxti}[1]{\vxt_{#1}}
\newcommand{\vxtii}{\vxti{i}}

\newcommand{\vb}{\mb{b}}
\newcommand{\vbt}{\tran{\vb}}
\newcommand{\vbi}[1]{\vb_{#1}}

\newcommand{\hvy}{\hat{\vy}}
\newcommand{\hvyi}[1]{\hvy_{#1}}


\renewcommand{\mp}{P}
\newcommand{\mpd}{\mp^{(d)}}
\newcommand{\mpt}{\mp^T}
\newcommand{\tmp}{\tilde{\mp}}
\newcommand{\mpi}[1]{\mp_{#1}}
\newcommand{\mpti}[1]{\mpt_{#1}}
\newcommand{\mptii}{\mpti{i}}
\newcommand{\mpii}{\mpi{i}}
\newcommand{\mps}{Q}
\newcommand{\mpsi}[1]{\mps_{#1}}
\newcommand{\mpsii}{\mpsi{i}}
\newcommand{\tmpt}{\tmp^T}
\newcommand{\mz}{Z}
\newcommand{\mv}{V}
\newcommand{\mvi}[1]{\mv_{#1}}
\newcommand{\mvt}{V^T}
\newcommand{\mvti}[1]{\mvt_{#1}}
\newcommand{\mzt}{\mz^T}
\newcommand{\tmz}{\tilde{\mz}}
\newcommand{\tmzt}{\tmz^T}
\newcommand{\mx}{\mathbf{X}}
\newcommand{\ma}{\mathbf{A}}
\newcommand{\Ft}{\mathbf{F}_{t}}
\newcommand{\invFt}{\mathbf{F}_{t}^{-1}}
\newcommand{\FtT}{\mathbf{F}_{t}^{\top}}
\newcommand{\invFtT}{(\FtT)^{-1}}
\newcommand{\mxs}[1]{\mx_{#1}}

\newcommand{\mai}[1]{\ma_{#1}}
\newcommand{\mat}{\tran{\ma}}
\newcommand{\mati}[1]{\mat_{#1}}

\newcommand{\mc}{{C}}
\newcommand{\mci}[1]{\mc_{#1}}
\newcommand{\mcti}[1]{\mct_{#1}}

\newcommand{\md}{{\mathbf{D}}}
\newcommand{\mdi}[1]{\md_{#1}}

\newcommand{\mxi}[1]{\textrm{diag}^2\paren{\vxi{#1}}}
\newcommand{\mxii}{\mxi{i}}

\newcommand{\hmx}{\hat{\mx}}
\newcommand{\hmxi}[1]{\hmx_{#1}}
\newcommand{\hmxii}{\hmxi{i}}
\newcommand{\hmxt}{\hmx^T}
\newcommand{\mxt}{\mx^\top}
\newcommand{\mi}{\mathbf{I}}
\newcommand{\mq}{Q}
\newcommand{\mqt}{\mq^T}
\newcommand{\mlam}{\Lambda}

\renewcommand{\L}{\mcal{L}}
\newcommand{\R}{\mcal{R}}
\newcommand{\X}{\mcal{X}}
\newcommand{\Y}{\mcal{Y}}
\newcommand{\F}{\mcal{F}}
\newcommand{\nur}[1]{\nu_{#1}}
\newcommand{\lambdar}[1]{\lambda_{#1}}
\newcommand{\gammai}[1]{\gamma_{#1}}
\newcommand{\gammaii}{\gammai{i}}
\newcommand{\alphai}[1]{\alpha_{#1}}
\newcommand{\alphaii}{\alphai{i}}
\newcommand{\lossp}[1]{\ell_{#1}}
\newcommand{\eps}{\epsilon}
\newcommand{\epss}{\eps^*}
\newcommand{\lsep}{\lossp{\eps}}
\newcommand{\lseps}{\lossp{\epss}}
\newcommand{\T}{\mcal{T}}

\newcommand{\kc}[1]{\begin{center}\fbox{\parbox{3in}{{\textcolor{green}{KC: #1}}}}\end{center}}
\newcommand{\edward}[1]{\begin{center}\fbox{\parbox{3in}{{\textcolor{red}{EM: #1}}}}\end{center}}
\newcommand{\nv}[1]{\begin{center}\fbox{\parbox{3in}{{\textcolor{blue}{NV: #1}}}}\end{center}}

\newcommand{\newstuffa}[2]{#2}
\newcommand{\newstufffroma}[1]{}
\newcommand{\newstufftoa}{}

\newcommand{\newstuff}[2]{#2}
\newcommand{\newstufffrom}[1]{}
\newcommand{\newstuffto}{}
\newcommand{\oldnote}[2]{}

\newcommand{\commentout}[1]{}
\newcommand{\mypar}[1]{\medskip\noindent{\bf #1}}

\newcommand{\inner}[2]{\left< {#1} , {#2} \right>}
\newcommand{\kernel}[2]{K\left({#1},{#2} \right)}
\newcommand{\tprr}{\tilde{p}_{rr}}
\newcommand{\hxr}{\hat{x}_{r}}
\newcommand{\projalg}{{PST }}
\newcommand{\projealg}[1]{$\textrm{PST}_{#1}~$}
\newcommand{\gradalg}{{GST }}

\newcounter {mySubCounter}
\newcommand {\twocoleqn}[4]{
  \setcounter {mySubCounter}{0} %
  \let\OldTheEquation \theequation %
  \renewcommand {\theequation }{\OldTheEquation \alph {mySubCounter}}%
  \noindent \hfill%
  \begin{minipage}{.40\textwidth}
\vspace{-0.6cm}
    \begin{equation}\refstepcounter{mySubCounter}
      #1
    \end {equation}
  \end {minipage}
~~~~~~
  \addtocounter {equation}{ -1}%
  \begin{minipage}{.40\textwidth}
\vspace{-0.6cm}
    \begin{equation}\refstepcounter{mySubCounter}
      #3
    \end{equation}
  \end{minipage}%
  \let\theequation\OldTheEquation
}

\newcommand{\vzero}{\mb{0}}

\newcommand{\smargin}{\mcal{M}}

\newcommand{\ai}[1]{A_{#1}}
\newcommand{\bi}[1]{B_{#1}}
\newcommand{\aii}{\ai{i}}
\newcommand{\bii}{\bi{i}}
\newcommand{\betai}[1]{\beta_{#1}}
\newcommand{\betaii}{\betai{i}}
\newcommand{\mar}{M}
\newcommand{\mari}[1]{\mar_{#1}}
\newcommand{\marii}{\mari{i}}
\newcommand{\nmari}[1]{m_{#1}}
\newcommand{\nmarii}{\nmari{i}}

\newcommand{\erf}{\Phi}

\newcommand{\var}{V}
\newcommand{\vari}[1]{\var_{#1}}
\newcommand{\varii}{\vari{i}}

\newcommand{\varb}{v}
\newcommand{\varbi}[1]{\varb_{#1}}
\newcommand{\varbii}{\varbi{i}}

\newcommand{\vara}{u}
\newcommand{\varai}[1]{\vara_{#1}}
\newcommand{\varaii}{\varai{i}}

\newcommand{\marb}{m}
\newcommand{\marbi}[1]{\marb_{#1}}
\newcommand{\marbii}{\marbi{i}}

\newcommand{\algname}{{AROW}}
\newcommand{\rlsname}{{RLS}}
\newcommand{\mrlsname}{{MRLS}}

\newcommand{\phia}{\psi}
\newcommand{\phib}{\xi}

\newcommand{\amsigmaii}{\tilde{\msigma}_t}
\newcommand{\amsigmai}[1]{\tilde{\msigma}_{#1}}
\newcommand{\avmuii}{\tilde{\vmu}_i}
\newcommand{\avmui}[1]{\tilde{\vmu}_{#1}}
\newcommand{\amarbii}{\tilde{\marb}_i}
\newcommand{\avarbii}{\tilde{\varb}_i}
\newcommand{\avaraii}{\tilde{\vara}_i}
\newcommand{\aalphaii}{\tilde{\alpha}_i}

\newcommand{\svar}{v}
\newcommand{\smar}{m}
\newcommand{\nsmar}{\bar{m}}

\newcommand{\vnu}{\mb{\nu}}
\newcommand{\vnut}{\vnu^\top}
\newcommand{\vz}{\mb{z}}
\newcommand{\vZ}{\mb{Z}}
\newcommand{\fphi}{f_{\phi}}
\newcommand{\gphi}{g_{\phi}}


\newcommand{\vtmui}[1]{\tilde{\vmu}_{#1}}
\newcommand{\vtmuii}{\vtmui{i}}

\newcommand{\zetai}[1]{\zeta_{#1}}
\newcommand{\zetaii}{\zetai{i}}


\newcommand{\vstate}{\bf{s}}
\newcommand{\vstatet}[1]{\vstate_{#1}}
\newcommand{\vstatett}{\vstatet{t}}

\newcommand{\mtran}{\bf{\Phi}}
\newcommand{\mtrant}[1]{\mtran_{#1}}
\newcommand{\mtrantt}{\mtrant{t}}

\newcommand{\vstatenoise}{\bf{\eta}}
\newcommand{\vstatenoiset}[1]{\vstatenoise_{#1}}
\newcommand{\vstatenoisett}{\vstatenoiset{t}}

\newcommand{\vobser}{\bf{o}}
\newcommand{\vobsert}[1]{\vobser_{#1}}
\newcommand{\vobsertt}{\vobsert{t}}

\newcommand{\mobser}{\bf{H}}
\newcommand{\mobsert}[1]{\mobser_{#1}}
\newcommand{\mobsertt}{\mobsert{t}}

\newcommand{\vobsernoise}{\bf{\nu}}
\newcommand{\vobsernoiset}[1]{\vobsernoise_{#1}}
\newcommand{\vobsernoisett}{\vobsernoiset{t}}

\newcommand{\mstatenoisecov}{\bf{Q}}
\newcommand{\mstatenoisecovt}[1]{\mstatenoisecov_{#1}}
\newcommand{\mstatenoisecovtt}{\mstatenoisecovt{t}}

\newcommand{\mobsernoisecov}{\bf{R}}
\newcommand{\mobsernoisecovt}[1]{\mobsernoisecov_{#1}}
\newcommand{\mobsernoisecovtt}{\mobsernoisecovt{t}}

\newcommand{\vestate}{\bf{\hat{s}}}
\newcommand{\vestatet}[1]{\vestate_{#1}}
\newcommand{\vestatett}{\vestatet{t}}
\newcommand{\vestatept}[1]{\vestatet{#1}^+}
\newcommand{\vestatent}[1]{\vestatet{#1}^-}

\newcommand{\mcovar}{\bf{P}}
\newcommand{\mcovart}[1]{\mcovar_{#1}}
\newcommand{\mcovarpt}[1]{\mcovart{#1}^+}
\newcommand{\mcovarnt}[1]{\mcovart{#1}^-}

\newcommand{\mkalmangain}{\bf{K}}
\newcommand{\mkalmangaint}[1]{\mkalmangain_{#1}}

\newcommand{\vkalmangain}{\bf{\kappa}}
\newcommand{\vkalmangaint}[1]{\vkalmangain_{#1}}

\newcommand{\obsernoise}{{\nu}}
\newcommand{\obsernoiset}[1]{\obsernoise_{#1}}
\newcommand{\obsernoisett}{\obsernoiset{t}}

\newcommand{\obsernoisecov}{r}
\newcommand{\obsernoisecovt}[1]{\obsernoisecov_{#1}}
\newcommand{\obsernoisecovtt}{\obsernoisecov}

\newcommand{\obsnscv}{s}
\newcommand{\obsnscvt}[1]{\obsnscv_{#1}}
\newcommand{\obsnscvtt}{\obsnscvt{t}}

\newcommand{\Psit}[1]{\Psi_{#1}}
\newcommand{\Psitt}{\Psit{t}}

\newcommand{\Omegat}[1]{\Omega_{#1}}
\newcommand{\Omegatt}{\Omegat{t}}

\newcommand{\ellt}[1]{\ell_{#1}}
\newcommand{\gllt}[1]{g_{#1}}

\newcommand{\chit}[1]{\chi_{#1}}

\newcommand{\ms}{\mathcal{M}}
\newcommand{\us}{\mathcal{U}}
\newcommand{\as}{\mathcal{A}}

\newcommand{\mn}{M}
\newcommand{\un}{U}

\newcommand{\seti}[1]{S_{#1}}

\newcommand{\obj}{\mcal{C}}

\newcommand{\dta}[3]{d_{#3}\paren{#1,#2}}

\newcommand{\coa}{a}
\newcommand{\coc}{c}
\newcommand{\cob}{b}
\newcommand{\cor}{r}
\newcommand{\conu}{\nu}

\newcommand{\coat}[1]{\coa_{#1}}
\newcommand{\coct}[1]{\coc_{#1}}
\newcommand{\cobt}[1]{\cob_{#1}}
\newcommand{\cort}[1]{\cor_{#1}}
\newcommand{\conut}[1]{\conu_{#1}}

\newcommand{\coatt}{\coat{t}}
\newcommand{\coctt}{\coct{t}}
\newcommand{\cobtt}{\cobt{t}}
\newcommand{\cortt}{\cort{t}}
\newcommand{\conutt}{\conut{t}}

\newcommand{\rb}{R_B}
\newcommand{\proj}{\textrm{proj}}

\begin{abstract}
  The goal of a learner in standard online learning is to maintain an
  average loss close to the loss of the best-performing {\em single}
  function in some class. In many real-world problems, such as rating
  or ranking items, there is no single best target function during
  the runtime of the algorithm, instead the best (local) target
  function is drifting over time. We develop a novel
  last-step min-max optimal algorithm in
  context of a drift. We analyze the algorithm in the worst-case
  regret framework and show that it maintains an average
  loss close to that of the best slowly changing sequence of linear
  functions, as long as the total of drift is sublinear. In some situations, our bound improves over existing bounds, and additionally the algorithm
  suffers logarithmic regret when there is no drift. 
  We also build on the $H_\infty$ filter and its bound, and develop
  and analyze a second algorithm for drifting setting. Synthetic
  simulations demonstrate the advantages of our algorithms in a
  worst-case constant drift setting.
\end{abstract}

\section{Introduction}
We consider the on-line learning problems, in which a learning
algorithm predicts real numbers given inputs in a sequence of
trials. An example of such a problem is to predict a stock's prices given
input about the current state of the stock-market.  In general, the
goal of the algorithm is to achieve an average loss that is not much
larger compared to the loss one suffers if it had always chosen to
predict according to the best-performing single function from some class
of functions. 

In the past half a century, many algorithms were proposed~(a review
can be found in a comprehensive book on the topic~\cite{CesaBiGa06})
for this problem, some of which are able to achieve an average loss
arbitrarily close to that of the best function in
retrospect. Furthermore, such guarantees hold even if the input and
output pairs are chosen in a fully adversarial manner with no
distributional assumptions.

Competing with the best {\em fixed} function might not suffice for
some problems. In many real-world applications, the true target
function is not fixed, but is slowly drifting over time. Consider a
function designed to rate movies for recommender systems given some
features. Over time a rate of a movie may change as more movies are
released or the season changes. Furthermore, the very own
personal-taste of a user may change as well.

With such properties in mind, we develop new learning algorithms
designed to work with target drift. The goal of an algorithm is to
maintain an average loss close to that of the best slowly changing
sequence of functions, rather than compete well with a single
function. We focus on problems for which this sequence consists only
of linear functions.  Some previous
algorithms~\cite{LittlestoneW94,ECCC-TR00-070,HerbsterW01,KivinenSW01}
designed for this problem are based on gradient descent, with
additional control on the norm (or Bregman divergence) of the
weight-vector used for prediction~\cite{KivinenSW01}, or the number of
inputs used to define it~\cite{CavallantiCG07}.

We take a different route and derive an algorithm based on the
last-step min-max approach proposed by Forster~\cite{Forster} and
later used \cite{TakimotoW00} for online density estimation. On each
iteration the algorithm makes the optimal min-max prediction with
respect to a quantity called regret, assuming it is the last
iteration. Yet, unlike previous work, it is optimal when a drift is
allowed. As opposed to the derivation of the last-step min-max
predictor for a fixed vector, the resulting optimization
problem 
is not straightforward to solve. We develop a dynamic program (a
recursion) to solve this problem, which allows to compute the optimal
last-step min-max predictor. We analyze the algorithm in the
worst-case regret framework and show that the algorithm maintains an
average loss close to that of the best slowly changing sequence of
functions, as long as the total drift is sublinear in the number of
rounds $T$.  Specifically, we show that if the total amount of drift
is $T\nu$ (for $\nu=o(1)$) the cumulative regret is bounded by $T
\nu^{1/3} + \log(T)$. When the instantaneous drift is close to
constant, this improves over a previous bound of Vaits and
Crammer~\cite{VaitsCr11} of an algorithm named ARCOR that showed a
bound of $T \nu^{1/4}\log(T)$. Additionally, when no drift is
introduced (stationary setting) our algorithm suffers logarithmic
regret, as for the algorithm of Forster~\cite{Forster}.  We
also build on the $H_\infty$ adaptive filter, which is min-max optimal
with respect to a {\em filtering task}, and derive another learning
algorithm based on the same min-max principle. We provide a regret
bound for this algorithm as well, and relate the two algorithms and
their respective bounds. Finally, synthetic simulations show the
advantages of our algorithms when a close to constant drift is allowed.

\section{Problem Setting}
\label{sec:problem_setting} 
We focus on the regression task evaluated with the squared loss.  Our
algorithms are designed for the online setting and work in iterations
(or rounds). On each round an online algorithm receives an
input-vector $\vxi{t}\in\reals^d$ and predicts a real value
$\hyi{t}\in\reals$. Then the algorithm receives a target label
$\yi{t}\in\reals$ associated with $\vxi{t}$, uses it to update its
prediction rule, and then proceeds to the next round.

On each round, the performance of the algorithm is evaluated using the
squared loss, $\ell_t(\textrm{alg})=\ell\paren{ \yi{t}, \hyi{t} }
= \paren{\hyi{t}- \yi{t} }^2$. The cumulative loss suffered 
over $T$ iterations is, \(
L_{T}(\textrm{alg})=\sum_{t=1}^{T}\ell_{t}(\textrm{alg})
. 
\) 
The goal of the algorithm is to have low cumulative loss compared to
predictors from some class. A large body of work is focused on linear
prediction functions of the form $f(\vx)=\vxt\vu$ where
$\vu\in\reals^d$ is some weight-vector. We denote by $\ell_t(\vu)
= \paren{\vxti{t}\vu-\yi{t}}^2$ the instantaneous loss of a
weight-vector $\vu$. 

We focus on algorithms that are able to compete against sequences of
weight-vectors, $(\vui{1} \comdots \vui{T})\in \reals^d \times \dots
\times \reals^d$, where $\vui{t}$ is used to make a prediction for the t$th$ example
$(\vxi{t},\yi{t})$.
We define the cumulative loss of such set by
\(
L_T( \{\vui{t}\}) = \sum_t^T \ell_t(\vui{t})
\) and the
  regret of an algorithm by 
\(
R_T(\{\vui{t}\})  = \sum_t^T (\yi{t}- \hyi{t})^2-  L_T(\{\vui{t}\})~.
\)
The goal of the algorithm is to have a low-regret, and formally to have ${R}_T(\{\vui{t}\}) = o(T)$, that is, the
average loss suffered by the algorithm will converge to the average
loss of the best linear function sequence $(\vui{1} \dots \vui{T})$.  

Clearly, with no restriction or penalty over the set $\{\vui{t}\}$ the right term of
the regret can easily be zero by setting, $\vui{t} = \vxi{t}
(\yi{t}/\normt{\vxi{t}})$, which implies $\ell_t(\vui{t})=0$ for all
$t$.
Thus, in the analysis below we incorporate the total drift
of the weight-vectors defined to be,
\begin{align}
V \!\!=\!\! V_T(\{\vui{t}\}) \!\!=\!\! \sum_{t=1}^{T-1} \!\normt{\vui{t}-\vui{t+1}} ~,~\nu \!\!=\!\! \nu(\{\vui{t}\}) \!\!=\!\! \frac{V}{T} ~,
\end{align}
where $\nu$ is the {\em average drift} .
Below we bound the regret with, $R_T(\{\vui{t}\})  \leq
\mcal{O} \paren{ T^\frac{2}{3}V^\frac{1}{3}+\log(T)} =
\mcal{O} \paren{ T \nu^\frac{1}{3}+\log(T)}$.  
Next, we develop an explicit form of the last-step
min-max algorithm with drift. 

\section{Algorithm}
We define the last-step minmax predictor $\hyi{T}$ to be\footnote{$\yi{T}$ and $\hyi{T}$ serve both as quantifiers (over the
 $\min$ and $\max$ operators, respectively), and as the optimal
 arguments of this optimization problem. },
\begin{align}
\arg\min_{\hat{y}_{T}}\max_{y_{T}}&\Bigg[\sum_{t=1}^{T}\left(y_{t}-\hat{y}_{t}\right)^{2}\nonumber\\
&-\min_{\mathbf{u}_{1},\ldots,\mathbf{u}_{T}}Q_{T}\left(\mathbf{u}_{1},\ldots,\mathbf{u}_{T}\right)\Bigg]~,\label{minmax_1}
\end{align}
where we define 
\begin{align}
Q_{t}\left(\mathbf{u}_{1},\ldots,\mathbf{u}_{t}\right) =& b\left\Vert \mathbf{u}_{1}\right\Vert ^{2}+c\sum_{s=1}^{t-1}\left\Vert \mathbf{u}_{s+1}-\mathbf{u}_{s}\right\Vert ^{2}\nonumber\\
& +\sum_{s=1}^{t}\left(y_{s}-\mathbf{u}_{s}^{\top}\mathbf{x}_{s}\right)^{2}~,\label{Q}
\end{align}
for some positive constants $b,c$. 
The last optimization problem can also be seen as a game where the
algorithm chooses a prediction $\hat{y}_t$ to minimize the last-step
regret, while an adversary chooses a target label ${y}_t$ to
maximize it.
The first term of \eqref{minmax_1}
is the loss suffered by the algorithm while
$Q_{t}\left(\mathbf{u}_{1},\ldots,\mathbf{u}_{t}\right)$ defined in
\eqref{Q} is a sum of the loss suffered by some sequence of linear
functions $\left(\mathbf{u}_{1},\ldots,\mathbf{u}_{t}\right)$, a
penalty for consecutive pairs that are far from each other, and for the
norm of the first to be far from zero.

We first solve recursively the inner optimization problem
$\min_{\mathbf{u}_{1},\ldots,\mathbf{u}_{t}}Q_{t}\left(\mathbf{u}_{1},\ldots,\mathbf{u}_{t}\right)$,
for which we
define an auxiliary function,
\begin{align}
P_{t}\left(\mathbf{u}_{t}\right)=\min_{\mathbf{u}_{1},\ldots,\mathbf{u}_{t-1}}Q_{t}\left(\mathbf{u}_{1},\ldots,\mathbf{u}_{t}\right) ~,\label{P}
\end{align}
which clearly satisfies, 
\begin{equation}
\min_{\mathbf{u}_{1},\ldots,\mathbf{u}_{t}}Q_{t}\left(\mathbf{u}_{1},\ldots,\mathbf{u}_{t}\right)
= \min_{\vui{t}} P_t(\vui{t}) ~.\label{PQ}
\end{equation}

We start the derivation of the algorithm with a lemma, stating a recursive form of the function-sequence $P_t(\vui{t})$.
\begin{lemma}
\label{lem:lemma11}
For $t=2,3,\ldots$ 
\begin{align*}
P_1(\mathbf{u}_1)&=Q_1(\mathbf{u}_1)\\ 
 P_{t}\left(\mathbf{u}_{t}\right)&=
\min_{\mathbf{u}_{t-1}}\Bigg(P_{t-1}\left(\mathbf{u}_{t-1}\right)
+c\left\Vert
    \mathbf{u}_{t}-\mathbf{u}_{t-1}\right\Vert
  ^{2} \\
&+\left(y_{t}-\mathbf{u}_{t}^{\top}\mathbf{x}_{t}\right)^{2}\Bigg).\nonumber
\end{align*}
 \end{lemma}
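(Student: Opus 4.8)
The plan is to prove both claims by recognizing that the objective $Q_t$ of \eqref{Q} separates additively across the round index, so that a standard backward-induction (dynamic-programming) decomposition applies. The base case is immediate: when $t=1$ there are no variables $\mathbf{u}_1,\ldots,\mathbf{u}_{t-1}$ to minimize over in \eqref{P}, hence $P_1(\mathbf{u}_1)=Q_1(\mathbf{u}_1)$ by definition.

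For the recursive step I would first isolate, inside $Q_t$, the only two summands that depend on $\mathbf{u}_t$: the last drift penalty $c\|\mathbf{u}_t-\mathbf{u}_{t-1}\|^2$, which is the $s=t-1$ term of the middle sum, and the last loss term $(y_t-\mathbf{u}_t^\top\mathbf{x}_t)^2$, which is the $s=t$ term of the loss sum. Peeling these off leaves exactly the objective at the previous round, yielding the exact identity
\begin{align*}
Q_t(\mathbf{u}_1,\ldots,\mathbf{u}_t)
= Q_{t-1}(\mathbf{u}_1,\ldots,\mathbf{u}_{t-1})
+ c\|\mathbf{u}_t-\mathbf{u}_{t-1}\|^2
+ (y_t-\mathbf{u}_t^\top\mathbf{x}_t)^2 ~.
\end{align*}

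The key step is then to split the joint minimization $\min_{\mathbf{u}_1,\ldots,\mathbf{u}_{t-1}}$ that defines $P_t(\mathbf{u}_t)$ into an outer minimization over $\mathbf{u}_{t-1}$ and an inner minimization over $\mathbf{u}_1,\ldots,\mathbf{u}_{t-2}$, using the elementary fact $\min_{a,b}(\cdot)=\min_b\min_a(\cdot)$. Because the two peeled-off terms depend only on $\mathbf{u}_{t-1}$ and $\mathbf{u}_t$, and not on $\mathbf{u}_1,\ldots,\mathbf{u}_{t-2}$, they are constant with respect to the inner minimization and can be pulled outside it. The inner minimization of $Q_{t-1}(\mathbf{u}_1,\ldots,\mathbf{u}_{t-1})$ over $\mathbf{u}_1,\ldots,\mathbf{u}_{t-2}$ is, by the definition \eqref{P}, precisely $P_{t-1}(\mathbf{u}_{t-1})$. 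Substituting this back produces the claimed recursion for $P_t(\mathbf{u}_t)$.

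I do not anticipate a genuine obstacle here, since the argument is a clean additive decomposition rather than a substantive computation. The only point requiring care is the index bookkeeping when peeling off terms: one must verify that the $s=t-1$ drift term and the $s=t$ loss term are the \emph{unique} summands of $Q_t$ containing $\mathbf{u}_t$, and that everything remaining reassembles \emph{exactly} into $Q_{t-1}$. The interchange of minimizations and the extraction of the $\mathbf{u}_t$-dependent terms from the inner minimum are then justified immediately by their independence of the inner variables.
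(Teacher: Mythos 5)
Your proposal is correct and follows essentially the same route as the paper's proof: peel off the $s=t-1$ drift term and $s=t$ loss term so that what remains is exactly $Q_{t-1}$, split the joint minimization as $\min_{\mathbf{u}_{t-1}}\min_{\mathbf{u}_1,\ldots,\mathbf{u}_{t-2}}$, and identify the inner minimum with $P_{t-1}(\mathbf{u}_{t-1})$. The only cosmetic difference is that you state the additive decomposition of $Q_t$ as a standalone identity before minimizing, whereas the paper carries it out inline within the chain of equalities.
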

The proof appears in \appref{proof_lemma11}. 
Using \lemref{lem:lemma11} we write explicitly the function $P_t(\vui{t})$. 
\begin{lemma}
\label{lem:lemma12}
The following equality holds 
\begin{align}
P_{t}\left(\mathbf{u}_{t}\right)=\mathbf{u}_{t}^{\top}\mathbf{D}_{t}\mathbf{u}_{t}-2\mathbf{u}_{t}^{\top}\mathbf{e}_{t}+f_{t}~,
\label{eqality_P}
\end{align}
where,
\begin{align}
&\mathbf{D}_{1} \!=b\mathbf{I}+\mathbf{x}_{1}\mathbf{x}_{1}^{\top}
~,~
\mathbf{D}_{t}=\left(\mathbf{D}_{t-1}^{-1}+c^{-1}\mathbf{I}\right)^{-1}+\mathbf{x}_{t}\mathbf{x}_{t}^{\top}\label{D}\\
&\mathbf{e}_{1}\!=y_{1}\mathbf{x}_{1}~,~
\mathbf{e}_{t}=\left(\mathbf{I}+c^{-1}\mathbf{D}_{t-1}\right)^{-1}\mathbf{e}_{t-1}+y_{t}\mathbf{x}_{t}\label{e}\\
&f_{1}\!=y_{1}^{2}~,~
f_{t}=f_{t-1}-\mathbf{e}_{t-1}^{\top}\left(c\mathbf{I}+\mathbf{D}_{t-1}\right)^{-1}\mathbf{e}_{t-1}+y_{t}^{2}\label{f}~.
\end{align}
Note that $\mathbf{D}_{t}\in\mathbb{R}^{d\times d}$ is a positive definite matrix,
$\mathbf{e}_{t}\in\mathbb{R}^{d\times1}$ and $f_{t}\in\mathbb{R}$.
\end{lemma}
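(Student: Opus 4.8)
The plan is to prove the claim by induction on $t$, using the recursion from \lemref{lem:lemma11}. For the base case $t=1$ I would simply expand $P_1(\mathbf{u}_1)=Q_1(\mathbf{u}_1)=b\|\mathbf{u}_1\|^2+(y_1-\mathbf{u}_1^\top\mathbf{x}_1)^2$ and collect terms, reading off $\mathbf{D}_1=b\mathbf{I}+\mathbf{x}_1\mathbf{x}_1^\top$, $\mathbf{e}_1=y_1\mathbf{x}_1$, and $f_1=y_1^2$, matching the stated initial conditions. Along the way I would note $\mathbf{D}_1\succ 0$ since $b>0$, which seeds the positive-definiteness claim.

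For the inductive step I assume $P_{t-1}(\mathbf{u}_{t-1})=\mathbf{u}_{t-1}^\top\mathbf{D}_{t-1}\mathbf{u}_{t-1}-2\mathbf{u}_{t-1}^\top\mathbf{e}_{t-1}+f_{t-1}$ with $\mathbf{D}_{t-1}\succ 0$. Substituting into \lemref{lem:lemma11}, the loss term $(y_t-\mathbf{u}_t^\top\mathbf{x}_t)^2$ does not depend on $\mathbf{u}_{t-1}$ and factors out of the inner minimization. The remaining objective $P_{t-1}(\mathbf{u}_{t-1})+c\|\mathbf{u}_t-\mathbf{u}_{t-1}\|^2$, viewed as a function of $\mathbf{u}_{t-1}$, is the quadratic $\mathbf{u}_{t-1}^\top(\mathbf{D}_{t-1}+c\mathbf{I})\mathbf{u}_{t-1}-2\mathbf{u}_{t-1}^\top(\mathbf{e}_{t-1}+c\mathbf{u}_t)+(f_{t-1}+c\|\mathbf{u}_t\|^2)$. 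Since $\mathbf{D}_{t-1}+c\mathbf{I}\succ 0$ (using $c>0$ and the inductive hypothesis), the minimizer and minimum are given by the standard formula: the minimum of $\mathbf{u}^\top A\mathbf{u}-2\mathbf{u}^\top\mathbf{v}+k$ over $\mathbf{u}$ equals $k-\mathbf{v}^\top A^{-1}\mathbf{v}$.

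Applying this with $A=\mathbf{D}_{t-1}+c\mathbf{I}$ and $\mathbf{v}=\mathbf{e}_{t-1}+c\mathbf{u}_t$, I would expand $\mathbf{v}^\top A^{-1}\mathbf{v}$, re-add the loss term, and regroup everything as a quadratic in $\mathbf{u}_t$. Reading off the three parts yields $\mathbf{D}_t=c\mathbf{I}-c^2(\mathbf{D}_{t-1}+c\mathbf{I})^{-1}+\mathbf{x}_t\mathbf{x}_t^\top$, then $\mathbf{e}_t=c(\mathbf{D}_{t-1}+c\mathbf{I})^{-1}\mathbf{e}_{t-1}+y_t\mathbf{x}_t$, and finally $f_t=f_{t-1}-\mathbf{e}_{t-1}^\top(c\mathbf{I}+\mathbf{D}_{t-1})^{-1}\mathbf{e}_{t-1}+y_t^2$, where the last expression already coincides with \eqref{f}.

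The main obstacle is reconciling the first two expressions with the stated recursions \eqref{D} and \eqref{e}; this reduces to the identity $c\mathbf{I}-c^2(\mathbf{D}_{t-1}+c\mathbf{I})^{-1}=(\mathbf{D}_{t-1}^{-1}+c^{-1}\mathbf{I})^{-1}$ and its companion $c(\mathbf{D}_{t-1}+c\mathbf{I})^{-1}=(\mathbf{I}+c^{-1}\mathbf{D}_{t-1})^{-1}$. Both follow because every matrix here is a function of $\mathbf{D}_{t-1}$ and hence commutes: writing $c\mathbf{I}-c^2(\mathbf{D}_{t-1}+c\mathbf{I})^{-1}=c(\mathbf{D}_{t-1}+c\mathbf{I})^{-1}\bigl[(\mathbf{D}_{t-1}+c\mathbf{I})-c\mathbf{I}\bigr]=c(\mathbf{D}_{t-1}+c\mathbf{I})^{-1}\mathbf{D}_{t-1}$, and separately factoring $\mathbf{D}_{t-1}^{-1}+c^{-1}\mathbf{I}=c^{-1}\mathbf{D}_{t-1}^{-1}(c\mathbf{I}+\mathbf{D}_{t-1})$ and inverting, both give the same $c(c\mathbf{I}+\mathbf{D}_{t-1})^{-1}\mathbf{D}_{t-1}$; the companion identity is immediate from $(\mathbf{I}+c^{-1}\mathbf{D}_{t-1})^{-1}=c(c\mathbf{I}+\mathbf{D}_{t-1})^{-1}$. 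I would close the induction by noting that $\mathbf{D}_{t-1}\succ 0$ and $c>0$ force $\mathbf{D}_{t-1}^{-1}+c^{-1}\mathbf{I}\succ 0$, so its inverse is positive definite, and adding the positive semidefinite $\mathbf{x}_t\mathbf{x}_t^\top$ preserves positive definiteness, establishing $\mathbf{D}_t\succ 0$.
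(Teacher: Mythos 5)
Your proposal is correct and takes essentially the same route as the paper's proof: induction on $t$ using \lemref{lem:lemma11}, minimizing the resulting quadratic in $\mathbf{u}_{t-1}$ via the standard formula (minimum of $\mathbf{u}^{\top}A\mathbf{u}-2\mathbf{u}^{\top}\mathbf{v}+k$ equals $k-\mathbf{v}^{\top}A^{-1}\mathbf{v}$), and reading off the three recursions from the regrouped quadratic in $\mathbf{u}_{t}$. The only difference is in one algebraic step — the paper converts $c\mathbf{I}+\mathbf{x}_{t}\mathbf{x}_{t}^{\top}-c^{2}\left(c\mathbf{I}+\mathbf{D}_{t-1}\right)^{-1}$ into $\left(\mathbf{D}_{t-1}^{-1}+c^{-1}\mathbf{I}\right)^{-1}+\mathbf{x}_{t}\mathbf{x}_{t}^{\top}$ by invoking the Woodbury identity, while you verify the same identity by direct factorization using the fact that all matrices involved are functions of $\mathbf{D}_{t-1}$ and hence commute — and you additionally spell out the positive-definiteness of $\mathbf{D}_{t}$, which the paper asserts but does not explicitly prove.
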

The proof appears in \appref{proof_lemma12}.
From \lemref{lem:lemma12} we conclude, by substituting \eqref{eqality_P} in \eqref{PQ}, that, 
\begin{align}
& \min_{\mathbf{u}_{1},\ldots,\mathbf{u}_{t}}Q_{t}\left(\mathbf{u}_{1},\ldots,\mathbf{u}_{t}\right)\nonumber\\
& = \min_{\mathbf{u}_{t}}\left(\mathbf{u}_{t}^{\top}\mathbf{D}_{t}\mathbf{u}_{t}-2\mathbf{u}_{t}^{\top}\mathbf{e}_{t}+f_{t}\right)
= -\mathbf{e}_{t}^{\top}\mathbf{D}_{t}^{-1}\mathbf{e}_{t}+f_{t}
 ~. \label{optimal_Q}
\end{align}
Substituting \eqref{optimal_Q} back in \eqref{minmax_1} we get that the last-step minmax
predictor $\hyi{T}$ is given by,
\begin{eqnarray}
\arg\min_{\hat{y}_{T}}\max_{y_{T}}\left[\sum_{t=1}^{T}\left(y_{t}-\hat{y}_{t}\right)^{2}+\mathbf{e}_{T}^{\top}\mathbf{D}_{T}^{-1}\mathbf{e}_{T}-f_{T}\right]
~. \label{minmax_2}
\end{eqnarray}
Since $\mathbf{e}_{T}$ depends on
$\yi{T} $ we substitute \eqref{e} in the second term
of \eqref{minmax_2}, 
\begin{align}
&\mathbf{e}_{T}^{\top}\mathbf{D}_{T}^{-1}\mathbf{e}_{T}=\nonumber\\
&\left(\left(\mathbf{I}+c^{-1}\mathbf{D}_{T-1}\right)^{-1}\mathbf{e}_{T-1}+y_{T}\mathbf{x}_{T}\right)^{\top}\mathbf{D}_{T}^{-1}\nonumber\\
&\left(\left(\mathbf{I}+c^{-1}\mathbf{D}_{T-1}\right)^{-1}\mathbf{e}_{T-1}+y_{T}\mathbf{x}_{T}\right) ~.\label{second_term}
\end{align}
Substituting \eqref{second_term} and \eqref{f} in \eqref{minmax_2} and
omitting terms not depending explicitly on $y_{T}$ and $\hat{y}_{T}$
we get,
\begin{align}
\hat{y}_T
&= \arg\min_{\hat{y}_{T}}\max_{y_{T}}\bigg[\left(y_{T}-\hat{y}_{T}\right)^{2}  + y_{T}^{2}\mathbf{x}_{T}^{\top}\mathbf{D}_{T}^{-1}\mathbf{x}_{T}\nonumber\\
& \quad
+2y_{T}\mathbf{x}_{T}^{\top}\mathbf{D}_{T}^{-1}\left(\mathbf{I}+c^{-1}\mathbf{D}_{T-1}\right)^{-1}\mathbf{e}_{T-1}-y_{T}^{2}\bigg]\nonumber\\
&=\arg\min_{\hat{y}_{T}}\max_{y_{T}}\bigg[\left(\mathbf{x}_{T}^{\top}\mathbf{D}_{T}^{-1}\mathbf{x}_{T}\right)y_{T}^{2}+\hat{y}_{T}^{2}\label{optimal_y}\\
&\quad+2y_{T}\left(\mathbf{x}_{T}^{\top}\mathbf{D}_{T}^{-1}\left(\mathbf{I}+c^{-1}\mathbf{D}_{T-1}\right)^{-1}\mathbf{e}_{T-1}-\hat{y}_{T}\right)\bigg]~.\nonumber
\end{align}
The last equation is strictly convex in $\yi{T}$ and thus the optimal
solution is not bounded. To solve it, we follow an approach 
used by Forster in a different context~\cite{Forster}. In order to make the optimal
value bounded, we assume that the adversary can only
choose labels from a bounded set $\yi{T}\in[-Y,Y]$. Thus, the optimal
solution of \eqref{optimal_y} over $\yi{T}$ is given by the following
equation, since the optimal value is $\yi{T}\in\{+Y,-Y\}$,
\begin{align*}
\hat{y}_T
&=\arg\min_{\hat{y}_{T}}\bigg[\left(\mathbf{x}_{T}^{\top}\mathbf{D}_{T}^{-1}\mathbf{x}_{T}\right)
  Y^2 +\hat{y}_{T}^{2}\\
& \quad +2 Y\left\vert 
  \mathbf{x}_{T}^{\top}\mathbf{D}_{T}^{-1}\left(\mathbf{I}+c^{-1}\mathbf{D}_{T-1}\right)^{-1}\mathbf{e}_{T-1}-\hat{y}_{T}\right\vert\bigg]~.
\end{align*}
This problem is of a similar form to the one discussed by Forster~\cite{Forster}, from which we get the optimal solution, 
\(
\hyi{T} = clip\paren{
  \mathbf{x}_{T}^{\top}\mathbf{D}_{T}^{-1}\left(\mathbf{I}+c^{-1}\mathbf{D}_{T-1}\right)^{-1}\mathbf{e}_{T-1},Y} ~,
\)
where for $y>0$ we define $clip(x,y)=\sign(x) \min\{\vert x \vert,
y\}$. 
The optimal solution depends explicitly on the bound $Y$, and as its
value is not known, we thus ignore it, and define the output of
the algorithm to be, 
\begin{align}
\hyi{T} =
\mathbf{x}_{T}^{\top}\mathbf{D}_{T}^{-1}\left(\mathbf{I}+c^{-1}\mathbf{D}_{T-1}\right)^{-1}\mathbf{e}_{T-1}
~. \label{my_predictor}
\end{align}
We call the algorithm LASER for last step adaptive regressor
algorithm, and it is summarized in \figref{algorithm:laser}. Clearly, for $c=\infty$ the LASER algorithm reduces to the AAR algorithm of Vovk~\cite{Vovk01}, or the last-step min-max algorithm of Forster~\cite{Forster}. See also the work of Azoury and Warmuth~\cite{AzouryWa01}.
The algorithm can be combined with Mercer kernels as it employs only
sums of inner- and outer-products of its inputs. This algorithm can be
seen also as a forward algorithm~\cite{AzouryWa01}: The predictor of
\eqref{my_predictor} can be seen as the optimal linear {\em model}
obtained over the same prefix of length $T-1$ and the new 
input $\vxi{T}$ with fictional-label $\yi{T}=0$. Specifically, from \eqref{e}
we get that if $\yi{T}=0$, then $\mathbf{e}_{T} = \left(\mathbf{I}+c^{-1}\mathbf{D}_{T-1}\right)^{-1}\mathbf{e}_{T-1}$.
The prediction of the optimal predictor defined in \eqref{optimal_Q} is
$\vxti{T}\mathbf{u}_{T}=\vxti{T}\mathbf{D}_{T}^{-1}\mathbf{e}_{T}=\hat{y}_T$,
where $\hat{y}_T$
was defined in \eqref{my_predictor}.

\section{Analysis}
We now analyze the performance of the algorithm in the worst-case
setting, starting with the following technical lemma.
\begin{lemma}
For all $t$ the following statement holds,
\begin{align*}
&\mathbf{D'}_{t-1}\mathbf{D}_{t}^{-1}\mathbf{x}_{t}\mathbf{x}_{t}^{\top}\mathbf{D}_{t}^{-1}\mathbf{D'}_{t-1}-\mathbf{D}_{t-1}^{-1}\\
&+\mathbf{D'}_{t-1}\left(\mathbf{D}_{t}^{-1}\mathbf{D'}_{t-1}+c^{-1}\mathbf{I}\right)\preceq0
\end{align*}
where $\mathbf{D'}_{t-1}=\left(\mathbf{I}+c^{-1}\mathbf{D}_{t-1}\right)^{-1}$.
\label{lem:technical}
\end{lemma}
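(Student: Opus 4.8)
The plan is to reduce the whole expression to a single rank-one matrix with a nonpositive scalar factor, by isolating the one genuinely non-commuting ingredient, namely the rank-one update $\mathbf{x}_t\mathbf{x}_t^\top$. First I would abbreviate $B := \mathbf{D'}_{t-1} = (\mathbf{I}+c^{-1}\mathbf{D}_{t-1})^{-1}$ and introduce the ``prior'' matrix $\mathbf{A} := (\mathbf{D}_{t-1}^{-1}+c^{-1}\mathbf{I})^{-1}$, so that by \eqref{D} the recursion reads $\mathbf{D}_t = \mathbf{A}+\mathbf{x}_t\mathbf{x}_t^\top$. All of $\mathbf{A}$, $B$, $\mathbf{D}_{t-1}$ are functions of $\mathbf{D}_{t-1}$ and hence commute, and two identities will carry the bookkeeping: $\mathbf{A}^{-1} = \mathbf{D}_{t-1}^{-1}+c^{-1}\mathbf{I}$ and $B\mathbf{A}^{-1} = \mathbf{D}_{t-1}^{-1}$ (the latter because $\mathbf{A}^{-1} = \mathbf{D}_{t-1}^{-1}(\mathbf{I}+c^{-1}\mathbf{D}_{t-1})$ while $B$ is the inverse of the last factor, and everything commutes).

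The key computational step is to control $\mathbf{D}_t^{-1}$ through the Sherman--Morrison formula applied to $\mathbf{D}_t = \mathbf{A}+\mathbf{x}_t\mathbf{x}_t^\top$. Setting $\alpha := 1+\mathbf{x}_t^\top\mathbf{A}^{-1}\mathbf{x}_t \ge 1$, this yields the clean proportionality $\mathbf{D}_t^{-1}\mathbf{x}_t = \alpha^{-1}\mathbf{A}^{-1}\mathbf{x}_t$, and more generally $\mathbf{D}_t^{-1} = \mathbf{A}^{-1}-\alpha^{-1}(\mathbf{A}^{-1}\mathbf{x}_t)(\mathbf{A}^{-1}\mathbf{x}_t)^\top$. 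Writing $\mathbf{q} := B\mathbf{A}^{-1}\mathbf{x}_t$, the first term of the lemma collapses to $\alpha^{-2}\mathbf{q}\mathbf{q}^\top$, while expanding $B\mathbf{D}_t^{-1}B$ with the same formula gives $B\mathbf{A}^{-1}B-\alpha^{-1}\mathbf{q}\mathbf{q}^\top$, and $B\mathbf{A}^{-1}B = \mathbf{D}_{t-1}^{-1}B$ by the second identity above.

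Collecting everything, the non-rank-one remainder is $\mathbf{D}_{t-1}^{-1}B + c^{-1}B - \mathbf{D}_{t-1}^{-1} = (\mathbf{D}_{t-1}^{-1}+c^{-1}\mathbf{I})B - \mathbf{D}_{t-1}^{-1} = \mathbf{A}^{-1}B - \mathbf{D}_{t-1}^{-1} = 0$, so it cancels exactly, and the rank-one pieces combine into $(\alpha^{-2}-\alpha^{-1})\mathbf{q}\mathbf{q}^\top = \frac{1-\alpha}{\alpha^2}\mathbf{q}\mathbf{q}^\top$. Since $\alpha \ge 1$ this is negative semidefinite, which proves the claim (in fact with equality to $\frac{1-\alpha}{\alpha^2}\mathbf{q}\mathbf{q}^\top$, not merely an inequality). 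The main obstacle is purely organizational rather than deep: the three summands each mix $B$, $\mathbf{D}_t^{-1}$ and $\mathbf{x}_t\mathbf{x}_t^\top$ in noncommuting products, so the art is to route every manipulation through the two commuting identities and the Sherman--Morrison reduction so that the commuting clutter cancels identically and only a single manifestly nonpositive rank-one term survives; confirming that the $\mathbf{q}\mathbf{q}^\top$ coefficients and the leftover commuting terms both fall out correctly is the one place where a sign or an inverse slip would derail the argument.
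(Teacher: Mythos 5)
Your proof is correct and is essentially the same argument as the paper's: both expand $\mathbf{D}_t^{-1}$ via Sherman--Morrison/Woodbury, exploit that all functions of $\mathbf{D}_{t-1}$ commute, and show the left-hand side is exactly the nonpositive rank-one matrix $\frac{1-\alpha}{\alpha^2}\,\mathbf{D}_{t-1}^{-1}\mathbf{x}_t\mathbf{x}_t^\top\mathbf{D}_{t-1}^{-1}$ with $\alpha = 1+\mathbf{x}_t^\top(\mathbf{D}_{t-1}^{-1}+c^{-1}\mathbf{I})\mathbf{x}_t$, which coincides with the paper's final expression. Your $\mathbf{q}$, $\alpha$ bookkeeping is merely a tidier organization of the same computation.
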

The proof appears in \appref{proof_lemma_technical}.
We next bound the cumulative loss of the algorithm,
\begin{theorem}
\label{thm:basic_bound}
Assume the labels are bounded $\sup_t \vert \yi{t} \vert \leq Y$ for some
$Y\in\reals$. Then the following bound holds,

\begin{align*}
L_T(\textrm{LASER})\leq&\min_{\mathbf{u}_{1},\ldots,\mathbf{u}_{T}}\Bigg[b\left\Vert
    \mathbf{u}_{1}\right\Vert ^{2}
+c V_T(\{\vui{t}\})\\
& \quad + L_T(\{\vui{t}\})
\Bigg] +Y^{2}\sum_{t=1}^{T}\mathbf{x}_{t}^{\top}\mathbf{D}_{t}^{-1}\mathbf{x}_{t} ~.
\end{align*}
\end{theorem}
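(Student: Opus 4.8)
The plan is to recognize that the minimization on the right-hand side is exactly $\min_{\mathbf{u}_{1},\ldots,\mathbf{u}_{T}} Q_{T}(\mathbf{u}_{1},\ldots,\mathbf{u}_{T})$, since the three summands $b\|\mathbf{u}_{1}\|^{2}$, $cV_{T}(\{\mathbf{u}_{t}\})$ and $L_{T}(\{\mathbf{u}_{t}\})$ reproduce term by term the three parts of $Q_{T}$ in \eqref{Q}. By \eqref{optimal_Q} this value equals the potential $\Phi_{t}\eqdef-\mathbf{e}_{t}^{\top}\mathbf{D}_{t}^{-1}\mathbf{e}_{t}+f_{t}$ evaluated at $t=T$. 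Hence the claim reduces to $L_{T}(\textrm{LASER})-\Phi_{T}\le Y^{2}\sum_{t=1}^{T}\mathbf{x}_{t}^{\top}\mathbf{D}_{t}^{-1}\mathbf{x}_{t}$. Setting $\Phi_{0}\eqdef0$ and telescoping, $L_{T}(\textrm{LASER})-\Phi_{T}=\sum_{t=1}^{T}\left[(y_{t}-\hat{y}_{t})^{2}-(\Phi_{t}-\Phi_{t-1})\right]$, so it suffices to prove the per-round inequality $(y_{t}-\hat{y}_{t})^{2}-(\Phi_{t}-\Phi_{t-1})\le Y^{2}\,\mathbf{x}_{t}^{\top}\mathbf{D}_{t}^{-1}\mathbf{x}_{t}$ for every $t$.

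Next I would compute the increment $\Phi_{t}-\Phi_{t-1}$ explicitly. Writing $\mathbf{D}'_{t-1}=(\mathbf{I}+c^{-1}\mathbf{D}_{t-1})^{-1}$ and substituting the recursions \eqref{e} and \eqref{f}, the label $y_{t}$ enters through $\mathbf{e}_{t}=\mathbf{D}'_{t-1}\mathbf{e}_{t-1}+y_{t}\mathbf{x}_{t}$. Expanding the quadratic $\mathbf{e}_{t}^{\top}\mathbf{D}_{t}^{-1}\mathbf{e}_{t}$ and using that the prediction \eqref{my_predictor} is precisely $\hat{y}_{t}=\mathbf{x}_{t}^{\top}\mathbf{D}_{t}^{-1}\mathbf{D}'_{t-1}\mathbf{e}_{t-1}$, the term $2y_{t}\hat{y}_{t}$ produced here cancels the matching cross term from $(y_{t}-\hat{y}_{t})^{2}$, and the free $y_{t}^{2}$ coming from \eqref{f} cancels the $y_{t}^{2}$ of $(y_{t}-\hat{y}_{t})^{2}$. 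The only surviving $y_{t}$-dependence is $y_{t}^{2}\,\mathbf{x}_{t}^{\top}\mathbf{D}_{t}^{-1}\mathbf{x}_{t}$, which I bound by $Y^{2}\,\mathbf{x}_{t}^{\top}\mathbf{D}_{t}^{-1}\mathbf{x}_{t}$ using $|y_{t}|\le Y$ and $\mathbf{D}_{t}\succ0$.

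After these cancellations the per-round quantity left to control is a pure quadratic form in $\mathbf{e}_{t-1}$, namely $\mathbf{e}_{t-1}^{\top}M_{t}\,\mathbf{e}_{t-1}$ with $M_{t}=\mathbf{D}'_{t-1}\mathbf{D}_{t}^{-1}\mathbf{x}_{t}\mathbf{x}_{t}^{\top}\mathbf{D}_{t}^{-1}\mathbf{D}'_{t-1}+\mathbf{D}'_{t-1}\mathbf{D}_{t}^{-1}\mathbf{D}'_{t-1}-\mathbf{D}_{t-1}^{-1}+(c\mathbf{I}+\mathbf{D}_{t-1})^{-1}$. Using the identity $(c\mathbf{I}+\mathbf{D}_{t-1})^{-1}=c^{-1}\mathbf{D}'_{t-1}$, the matrix $M_{t}$ is exactly the left-hand side of \lemref{lem:technical}, which asserts $M_{t}\preceq0$; hence $\mathbf{e}_{t-1}^{\top}M_{t}\mathbf{e}_{t-1}\le0$ and the per-round inequality follows for $t\ge2$. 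The base case $t=1$ is handled directly from $\hat{y}_{1}=0$, $\mathbf{e}_{1}=y_{1}\mathbf{x}_{1}$ and $f_{1}=y_{1}^{2}$, which give $(y_{1}-\hat{y}_{1})^{2}-\Phi_{1}=y_{1}^{2}\,\mathbf{x}_{1}^{\top}\mathbf{D}_{1}^{-1}\mathbf{x}_{1}\le Y^{2}\,\mathbf{x}_{1}^{\top}\mathbf{D}_{1}^{-1}\mathbf{x}_{1}$, so no appeal to the technical lemma is needed there.

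The main obstacle is the algebraic bookkeeping in the middle step: one must carry the $\mathbf{D}'_{t-1}$ factors through $\mathbf{e}_{t}^{\top}\mathbf{D}_{t}^{-1}\mathbf{e}_{t}$, verify that every $y_{t}$-linear term cancels against the expansion of $(y_{t}-\hat{y}_{t})^{2}$, and massage the leftover matrix into precisely the form appearing in \lemref{lem:technical}, which hinges on recognizing $(c\mathbf{I}+\mathbf{D}_{t-1})^{-1}=c^{-1}\mathbf{D}'_{t-1}$. Once the residual is aligned with the lemma the semidefinite bound is immediate, the $y_{t}^{2}$ term supplies the $Y^{2}\mathbf{x}_{t}^{\top}\mathbf{D}_{t}^{-1}\mathbf{x}_{t}$ contribution, and the telescoping sum closes the proof.
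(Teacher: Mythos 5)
Your proposal is correct and takes essentially the same route as the paper: your potential $\Phi_t$ is exactly $\min_{\mathbf{u}_1,\ldots,\mathbf{u}_t}Q_t$ by \eqref{optimal_Q}, your per-round expansion and cancellation of the $y_t$-linear and $y_t^2$ terms reproduces the paper's algebraic manipulation in \eqref{step1}--\eqref{step2}, and the surviving quadratic form in $\mathbf{e}_{t-1}$ is killed by the same appeal to \lemref{lem:technical} before bounding $y_t^2 \leq Y^2$ and telescoping. The only cosmetic difference is that you treat $t=1$ as an explicit base case, while the paper absorbs it into the general recursion via the conventions $\mathbf{e}_0=\mb{0}$, $f_0=0$, $\mathbf{D}_0=\frac{bc}{c-b}\mi$.
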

\begin{figure}[t!]
{
\paragraph{Parameters:} $0<b<c$
\paragraph{Initialize:} Set
$\mathbf{D}_{0}=(bc)/(c-b)\,\mathbf{I}\in\reals^{d\times d}$ and $\mathbf{e}_0=\vzero\in\reals^d$\\
{\bf For $t=1 \comdots T$} do
\begin{itemize}
\nolineskips
\item Receive an instance $\vxi{t}$
\item Compute
  $\mathbf{D}_{t}=\left(\mathbf{D}_{t-1}^{-1}+c^{-1}\mathbf{I}\right)^{-1}+\mathbf{x}_{t}\mathbf{x}_{t}^{\top}$
  \hfill\eqref{D}
\item Output  prediction \\ $\hyi{t}=\mathbf{x}_{t}^{\top}\mathbf{D}_{t}^{-1}\left(\mathbf{I}+c^{-1}\mathbf{D}_{t-1}\right)^{-1}\mathbf{e}_{t-1}$
\item Receive the correct label $\yi{t}$
\item
Update:
$\mathbf{e}_{t}=\left(\mathbf{I}+c^{-1}\mathbf{D}_{t-1}\right)^{-1}\mathbf{e}_{t-1}+y_{t}\mathbf{x}_{t}$
\hfill\eqref{e}
\end{itemize}
\paragraph{Output:}  $\mathbf{e}_{T} \ ,\ \mathbf{D}_{T}$\\
}
\figline
\caption{LASER: last step adaptive regression algorithm.}
\label{algorithm:laser}
\end{figure}

\begin{proof}
Fix $t$. A long algebraic manipulation
 yields,
\begin{align}
&\left(y_{t}-\hat{y}_{t}\right)^{2}+\min_{\mathbf{u}_{1},\ldots,\mathbf{u}_{t-1}}Q_{t-1}\left(\mathbf{u}_{1},\ldots,\mathbf{u}_{t-1}\right)\nonumber\\  
&-\min_{\mathbf{u}_{1},\ldots,\mathbf{u}_{t}}Q_{t}\left(\mathbf{u}_{1},\ldots,\mathbf{u}_{t}\right)\nonumber\\
=&\left(y_{t}-\hat{y}_{t}\right)^{2}
+2y_{t}\mathbf{x}_{t}^{\top}\mathbf{D}_{t}^{-1}\mathbf{D'}_{t-1}\mathbf{e}_{t-1}\nonumber\\
&\!+\!\mathbf{e}_{t-1}^{\top}\!\Bigg[ \!-\!\mathbf{D}_{t-1}^{-1} 
\!+\!\!
\mathbf{D'}_{t-1}
\!\left(\mathbf{D}_{t}^{-1}
\mathbf{D'}_{t-1}
\!\!
\!+\!c^{-1}\mathbf{I}\right)\!\!\Bigg]\!\mathbf{e}_{t-1}\nonumber\\
&+y_{t}^{2}\mathbf{x}_{t}^{\top}\mathbf{D}_{t}^{-1}\mathbf{x}_{t}-y_{t}^{2}~.
\label{step1}
\end{align} 
Substituting the specific value of the predictor 
$\hat{y}_{t}=\mathbf{x}_{t}^{\top}\mathbf{D}_{t}^{-1}\mathbf{D'}_{t-1}\mathbf{e}_{t-1}$
from \eqref{my_predictor}, we get that \eqref{step1} equals to,
\begin{align}
  & \hat{y}_{t}^{2}+y_{t}^{2}\mathbf{x}_{t}^{\top}\mathbf{D}_{t}^{-1}\mathbf{x}_{t}+\mathbf{e}_{t-1}^{\top}\Bigg[-\mathbf{D}_{t-1}^{-1}\nonumber\\
&+\mathbf{D'}_{t-1}\left(\mathbf{D}_{t}^{-1}\mathbf{D'}_{t-1}+c^{-1}\mathbf{I}\right)\Bigg]\mathbf{e}_{t-1}\nonumber\\
 =& \mathbf{e}_{t-1}^{\top}\mathbf{D'}_{t-1}\mathbf{D}_{t}^{-1}\mathbf{x}_{t}\mathbf{x}_{t}^{\top}\mathbf{D}_{t}^{-1}\mathbf{D'}_{t-1}\mathbf{e}_{t-1}+\mathbf{e}_{t-1}^{\top}\Bigg[-\mathbf{D}_{t-1}^{-1}\nonumber\\
&+\mathbf{D'}_{t-1}\left(\mathbf{D}_{t}^{-1}\mathbf{D'}_{t-1}+c^{-1}\mathbf{I}\right)\Bigg]\mathbf{e}_{t-1}
 +y_{t}^{2}\mathbf{x}_{t}^{\top}\mathbf{D}_{t}^{-1}\mathbf{x}_{t}\nonumber\\
 =& \mathbf{e}_{t-1}^{\top}
\Bigg[\mathbf{D'}_{t-1}\mathbf{D}_{t}^{-1}\mathbf{x}_{t}\mathbf{x}_{t}^{\top}\mathbf{D}_{t}^{-1}\mathbf{D'}_{t-1}-\mathbf{D}_{t-1}^{-1}\label{step2}\\
&+\mathbf{D'}_{t-1}\left(\mathbf{D}_{t}^{-1}\mathbf{D'}_{t-1}+c^{-1}\mathbf{I}\right)\Bigg]\mathbf{e}_{t-1}+y_{t}^{2}\mathbf{x}_{t}^{\top}\mathbf{D}_{t}^{-1}\mathbf{x}_{t} ~.\nonumber
\end{align}
Using \lemref{lem:technical} we upper bound
\eqref{step2} with,
\(
y_{t}^{2}\mathbf{x}_{t}^{\top}\mathbf{D}_{t}^{-1}\mathbf{x}_{t} \leq Y^{2}\mathbf{x}_{t}^{\top}\mathbf{D}_{t}^{-1}\mathbf{x}_{t}
\) .
Finally, summing over $t\in\left\{ 1,\ldots,T\right\} $ gives the
desired bound,
\begin{align*}
&\sum_{t=1}^{T}\left(y_{t}-\hat{y}_{t}\right)^{2}-\min_{\mathbf{u}_{1},\ldots,\mathbf{u}_{T}} \left[b\left\Vert
    \mathbf{u}_{1}\right\Vert ^{2} +c\sum_{t=1}^{T-1}\left\Vert
    \mathbf{u}_{t+1}-\mathbf{u}_{t}\right\Vert
  ^{2}\right.\\
  & \left. \quad+\sum_{t=1}^{T}\left(y_{t}-\mathbf{u}_{t}^{\top}\mathbf{x}_{t}\right)^{2}\right]\\
=~ &L_T(\textrm{LASER})\!\!-\!\!\min_{\mathbf{u}_{1},\ldots,\mathbf{u}_{T}}\Bigg[b\left\Vert
    \mathbf{u}_{1}\right\Vert ^{2}
\!+\!c V_T(\{\vui{t}\})
+ L_T(\{\vui{t}\})
\Bigg]\\
\leq~&
Y^{2}\sum_{t=1}^{T}\mathbf{x}_{t}^{\top}\mathbf{D}_{t}^{-1}\mathbf{x}_{t}
\end{align*}
\end{proof}
\begin{figure}[t!]
{
\paragraph{Parameters:} $1<a$ ~,~ $ 0<b,c$
\paragraph{Initialize:} Set
$\mathbf{P}_{0}=b^{-1}\mi\in\reals^{d\times d}$ and $\mathbf{w}_0=\vzero\in\reals^d$\\
{\bf For $t=1 \comdots T$} do
\begin{itemize}
\nolineskips
\item Receive an instance $\vxi{t}$
\item Output  prediction $\hyi{t}=\mathbf{x}_{t}^{\top}\mathbf{w}_{t-1}$
\item Receive the correct label $\yi{t}$
\item Compute $ \tilde{\mathbf{P}}_{t}=\left( \mathbf{P}_{t-1}^{-1}  + (a-1)\vxi{t}\vxti{t}\right)^{-1}$
\item
Update $\mathbf{w}_{t}=\mathbf{w}_{t-1} + a\tilde{\mathbf{P}}_{t} 
(\yi{t}-\hyi{t}) \vxi{t}$ 
\item Update $\mathbf{P}_t = \tilde{\mathbf{P}}_t + c^{-1}\mi$
\end{itemize}
\paragraph{Output:}  $\mathbf{w}_{T} \ ,\ \mathbf{P}_{T}$\\
}
\figline
\caption{An $H_\infty$ algorithm for online regression.}
\label{algorithm:hi}
\end{figure}

In the next lemma we further bound the right term of
\thmref{thm:basic_bound}.
This type of bound is based on the
usage of the covariance-like matrix $\mathbf{D}$.
\begin{lemma}
\label{lem:bound_1}
\begin{equation}
\sum_{t=1}^{T}\mathbf{x}_{t}^{\top}\mathbf{D}_{t}^{-1}\mathbf{x}_{t}\leq\ln\left|\frac{1}{b}\mathbf{D}_{T}\right|
+c^{-1}\sum_{t=1}^{T} \tr\paren{\mathbf{D}_{t-1}} ~.\label{covariance_bound}
\end{equation}
\end{lemma}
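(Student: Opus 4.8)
The plan is to convert each quadratic form $\mathbf{x}_{t}^{\top}\mathbf{D}_{t}^{-1}\mathbf{x}_{t}$ into the logarithm of a ratio of determinants, so that most of the sum telescopes. First I would isolate the rank-one update in the recursion \eqref{D} by writing $\mathbf{D}_{t}=\tilde{\mathbf{D}}_{t}+\mathbf{x}_{t}\mathbf{x}_{t}^{\top}$, where $\tilde{\mathbf{D}}_{t}=\left(\mathbf{D}_{t-1}^{-1}+c^{-1}\mathbf{I}\right)^{-1}$ is the ``time-update'' matrix. Applying the matrix determinant lemma gives $|\mathbf{D}_{t}|=|\tilde{\mathbf{D}}_{t}|\,(1+z_{t})$ with $z_{t}=\mathbf{x}_{t}^{\top}\tilde{\mathbf{D}}_{t}^{-1}\mathbf{x}_{t}\ge 0$, and the Sherman--Morrison identity turns the quantity we care about into $\mathbf{x}_{t}^{\top}\mathbf{D}_{t}^{-1}\mathbf{x}_{t}=z_{t}/(1+z_{t})=1-|\tilde{\mathbf{D}}_{t}|/|\mathbf{D}_{t}|$. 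Using the scalar inequality $z/(1+z)\le\ln(1+z)$ for $z\ge 0$ then yields the pointwise bound $\mathbf{x}_{t}^{\top}\mathbf{D}_{t}^{-1}\mathbf{x}_{t}\le\ln\bigl(|\mathbf{D}_{t}|/|\tilde{\mathbf{D}}_{t}|\bigr)$.

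Next I would re-express $|\tilde{\mathbf{D}}_{t}|$ in terms of $|\mathbf{D}_{t-1}|$. Since $\tilde{\mathbf{D}}_{t}^{-1}=\mathbf{D}_{t-1}^{-1}+c^{-1}\mathbf{I}$, factoring out $\mathbf{D}_{t-1}^{-1}$ gives $|\tilde{\mathbf{D}}_{t}|=|\mathbf{D}_{t-1}|/|\mathbf{I}+c^{-1}\mathbf{D}_{t-1}|$. Substituting this in, the per-step bound splits as
\[
\mathbf{x}_{t}^{\top}\mathbf{D}_{t}^{-1}\mathbf{x}_{t}\le\ln\frac{|\mathbf{D}_{t}|}{|\mathbf{D}_{t-1}|}+\ln\bigl|\mathbf{I}+c^{-1}\mathbf{D}_{t-1}\bigr|~.
\]
Summing over $t=1,\ldots,T$, the first term telescopes to $\ln\bigl(|\mathbf{D}_{T}|/|\mathbf{D}_{0}|\bigr)$, leaving $\sum_{t=1}^{T}\ln|\mathbf{I}+c^{-1}\mathbf{D}_{t-1}|$ to be controlled.

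For the residual sum I would bound each term by the trace: diagonalizing the positive definite $\mathbf{D}_{t-1}$ and using $\ln(1+x)\le x$ on each eigenvalue gives $\ln|\mathbf{I}+c^{-1}\mathbf{D}_{t-1}|\le c^{-1}\tr(\mathbf{D}_{t-1})$, which produces exactly the second term of \eqref{covariance_bound}. It remains to replace $\ln(|\mathbf{D}_{T}|/|\mathbf{D}_{0}|)$ by the target $\ln|\tfrac{1}{b}\mathbf{D}_{T}|=\ln|\mathbf{D}_{T}|-d\ln b$; this reduces to checking $|\mathbf{D}_{0}|\ge b^{d}$, and since $\mathbf{D}_{0}=\tfrac{bc}{c-b}\mathbf{I}$ with $0<b<c$ we have $\tfrac{bc}{c-b}>b$, so the constant only helps.

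The conceptually novel point — and the step I expect to require the most care — is the appearance of the extra term $\ln|\mathbf{I}+c^{-1}\mathbf{D}_{t-1}|$, which is absent in the stationary AAR/Forster analysis. It arises precisely because the drift recursion inflates the covariance by $c^{-1}\mathbf{I}$ at every step, breaking the clean telescoping $\sum_{t}\mathbf{x}_{t}^{\top}\mathbf{D}_{t}^{-1}\mathbf{x}_{t}=\ln(|\mathbf{D}_{T}|/|\mathbf{D}_{0}|)$ one would get for $c=\infty$. Controlling this term by the trace is what ultimately ties the regret to the drift penalty, so the main obstacle is not the algebra of the determinant identities but verifying that this extra contribution is genuinely $O\bigl(c^{-1}\sum_{t}\tr(\mathbf{D}_{t-1})\bigr)$ rather than something larger.
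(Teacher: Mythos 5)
Your proof is correct and follows essentially the same route as the paper's: the per-step bound $\mathbf{x}_{t}^{\top}\mathbf{D}_{t}^{-1}\mathbf{x}_{t}\leq\ln\paren{\left|\mathbf{D}_{t}\right|/\left|\mathbf{D}_{t}-\mathbf{x}_{t}\mathbf{x}_{t}^{\top}\right|}$, the determinant identity splitting off $\ln\left|\mathbf{I}+c^{-1}\mathbf{D}_{t-1}\right|$, telescoping the ratio $\ln\paren{\left|\mathbf{D}_{t}\right|/\left|\mathbf{D}_{t-1}\right|}$, bounding the residual by $c^{-1}\tr\paren{\mathbf{D}_{t-1}}$, and absorbing $\left|\mathbf{D}_{0}\right|\geq b^{d}$ into the constant. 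The only difference is that you spell out, via the matrix determinant lemma and Sherman--Morrison, the first inequality that the paper simply imports from Forster with details omitted.
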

\begin{proof}
Similar to the derivation of Forster~\cite{Forster} (details omitted due
to lack of space),
\begin{align*}
\mathbf{x}_{t}^{\top}\mathbf{D}_{t}^{-1}\mathbf{x}_{t}
& \leq  \ln\frac{\left|\mathbf{D}_{t}\right|}{\left|\mathbf{D}_{t}-\mathbf{x}_{t}\mathbf{x}_{t}^{\top}\right|}
  =  \ln\frac{\left|\mathbf{D}_{t}\right|}{\left|\left(\mathbf{D}_{t-1}^{-1}+c^{-1}\mathbf{I}\right)^{-1}\right|}\\
 & =  \ln\frac{\left|\mathbf{D}_{t}\right|}{\left|\mathbf{D}_{t-1}\right|}\left|\left(\mathbf{I}+c^{-1}\mathbf{D}_{t-1}\right)\right|\\
 & =
 \ln\frac{\left|\mathbf{D}_{t}\right|}{\left|\mathbf{D}_{t-1}\right|}+\ln\left|\left(\mathbf{I}+c^{-1}\mathbf{D}_{t-1}\right)\right| ~.
\end{align*}
 and because $\ln\left|\frac{1}{b}\mathbf{D}_{0}\right| \geq 0$ we get
\(
\sum_{t=1}^{T}\mathbf{x}_{t}^{\top}\mathbf{D}_{t}^{-1}\mathbf{x}_{t}\leq
\ln\left|\frac{1}{b}\mathbf{D}_{T}\right|
+\sum_{t=1}^{T}\ln\left|\left(\mathbf{I}+c^{-1}\mathbf{D}_{t-1}\right)\right|
\leq \ln\left|\frac{1}{b}\mathbf{D}_{T}\right|
 +c^{-1}\sum_{t=1}^{T} \tr\paren{\mathbf{D}_{t-1}} ~.
\)
\end{proof}

At first sight it seems that the right term of
\eqref{covariance_bound} may grow super-linearly with $T$, as each of the
matrices $\mdi{t}$ grows with $t$. The next two lemmas show that this
is not the case, and in fact, the right term of
\eqref{covariance_bound} is not growing too fast, which will allow us to
obtain a sub-linear regret bound. \lemref{operator_scalar} analyzes
the properties of the recursion of $\mathbf{D}$ defined in
\eqref{D} for scalars, that is $d=1$. In \lemref{eigen_values_lemma} we extend
this analysis to matrices.
\begin{lemma}
\label{operator_scalar}
Define 
\(
f(\lambda) = {\lambda \beta}/\paren{\lambda+ \beta} + x^2 
\)
for $\beta,\lambda \geq 0$ and some $x^2 \leq \gamma^2$. Then: {\bf (1)} $f(\lambda) \leq \beta +
\gamma^2$ {\bf (2)} $f(\lambda) \leq \lambda + \gamma^2$ {\bf (3)} $f(\lambda) \leq \max\braces{\lambda,\frac{3\gamma^2 +
  \sqrt{\gamma^4+4\gamma^2\beta}}{2}}$ ~.
\end{lemma}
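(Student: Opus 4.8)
The plan is to exploit the fact that the quantity $\frac{\lambda\beta}{\lambda+\beta}$ is the ``parallel-resistance'' (half-harmonic-mean) combination of $\lambda$ and $\beta$, and is therefore dominated by each of its arguments. Writing $\frac{\lambda\beta}{\lambda+\beta} = \lambda\cdot\frac{\beta}{\lambda+\beta} = \beta\cdot\frac{\lambda}{\lambda+\beta}$ and noting that each fraction lies in $[0,1]$ for $\lambda,\beta\geq 0$, I immediately obtain the two one-sided bounds $\frac{\lambda\beta}{\lambda+\beta}\leq\beta$ and $\frac{\lambda\beta}{\lambda+\beta}\leq\lambda$, i.e. $\frac{\lambda\beta}{\lambda+\beta}\leq\min\{\lambda,\beta\}$. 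Parts (1) and (2) follow at once by adding $x^2\leq\gamma^2$ to each side.

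For part (3) I would first reduce to the extreme case $x^2=\gamma^2$: since $f$ is increasing in $x^2$, it suffices to prove the bound for $g(\lambda)\eqdef\frac{\lambda\beta}{\lambda+\beta}+\gamma^2\geq f(\lambda)$. I then split into two cases according to whether $g(\lambda)\leq\lambda$ or $g(\lambda)>\lambda$, where I abbreviate $C\eqdef\frac{3\gamma^2+\sqrt{\gamma^4+4\gamma^2\beta}}{2}$. In the first case there is nothing to do, since $g(\lambda)\leq\lambda\leq\max\{\lambda,C\}$.

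The heart of the argument is the second case. The inequality $g(\lambda)>\lambda$ rearranges to $\gamma^2>\lambda-\frac{\lambda\beta}{\lambda+\beta}=\frac{\lambda^2}{\lambda+\beta}$, that is, $\lambda^2-\gamma^2\lambda-\gamma^2\beta<0$. This quadratic in $\lambda$ has a single nonnegative root $\lambda_+=\frac{\gamma^2+\sqrt{\gamma^4+4\gamma^2\beta}}{2}$ (its other root is negative, since the product of the roots equals $-\gamma^2\beta\leq 0$), so $g(\lambda)>\lambda$ forces $\lambda<\lambda_+$. Combining this with the bound from part (2), $g(\lambda)\leq\lambda+\gamma^2$, yields $g(\lambda)<\lambda_++\gamma^2=\frac{3\gamma^2+\sqrt{\gamma^4+4\gamma^2\beta}}{2}=C$, which is exactly the second term of the maximum. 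Hence in either case $f(\lambda)\leq g(\lambda)\leq\max\{\lambda,C\}$.

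I expect the only real obstacle to be the bookkeeping in part (3): correctly identifying that the relevant threshold is the positive root of $\lambda^2-\gamma^2\lambda-\gamma^2\beta$, and verifying the degenerate cases $\beta=0$ (where the roots collapse to $0$ and $\gamma^2$) and $\lambda=0$, so that the case split genuinely covers all $\lambda\geq 0$. Everything else is elementary algebra once the parallel-resistance bounds $\frac{\lambda\beta}{\lambda+\beta}\leq\min\{\lambda,\beta\}$ from parts (1)--(2) are in hand.
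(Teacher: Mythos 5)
Your proof is correct and takes essentially the same approach as the paper's: parts (1)--(2) come from $\lambda\beta/(\lambda+\beta)\leq\min\{\lambda,\beta\}$, and part (3) hinges on the same quadratic $\lambda^{2}-\gamma^{2}\lambda-\gamma^{2}\beta$, whose positive root $\frac{\gamma^{2}+\sqrt{\gamma^{4}+4\gamma^{2}\beta}}{2}$ separates the regime where $f(\lambda)\leq\lambda$ from the regime where part (2) yields the bound $\lambda+\gamma^{2}\leq\frac{3\gamma^{2}+\sqrt{\gamma^{4}+4\gamma^{2}\beta}}{2}$. Your case split on $g(\lambda)$ versus $\lambda$ (rather than on $\lambda$ versus the root, as in the paper) is a logically equivalent reorganization of the same argument.
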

The proof appears in \appref{proof_operator_scalar}.
We build on \lemref{operator_scalar} to bound the maximal eigenvalue of the
matrices $\mdi{t}$.
\begin{lemma}
\label{eigen_values_lemma}
Assume $\normt{\vxi{t}} \leq X^2$ for some $X$. Then, the eigenvalues
of $\mdi{t}$ (for $t \geq 1$), denoted by $\lambda_i\paren{\mdi{t}}$, are upper bounded by $\max_i\lambda_i\paren{\mdi{t}}\leq\max\braces{ \frac{3X^2 +
  \sqrt{X^4+4X^2 c}}{2},b+X^2} $.
\end{lemma}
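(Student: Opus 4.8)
The plan is to reduce the matrix recursion \eqref{D} to the scalar recursion analyzed in \lemref{operator_scalar}, apply it to the largest eigenvalue of $\mathbf{D}_t$, and close with an induction on $t$. First I would write $\mathbf{A}_t=\left(\mathbf{D}_{t-1}^{-1}+c^{-1}\mathbf{I}\right)^{-1}$, so that \eqref{D} reads $\mathbf{D}_t=\mathbf{A}_t+\vxi{t}\vxti{t}$. Since $\mathbf{D}_{t-1}$ is symmetric positive definite (\lemref{lem:lemma12}), $\mathbf{A}_t$ shares its eigenbasis, and an eigenvalue $\lambda$ of $\mathbf{D}_{t-1}$ is mapped to the eigenvalue $\lambda c/(\lambda+c)$ of $\mathbf{A}_t$. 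Because $\lambda\mapsto\lambda c/(\lambda+c)$ is increasing for $\lambda,c>0$, the largest eigenvalue of $\mathbf{A}_t$ equals $\max_i\lambda_i(\mathbf{D}_{t-1})\,c/\left(\max_i\lambda_i(\mathbf{D}_{t-1})+c\right)$.

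Next I would control the rank-one update. Using the Rayleigh-quotient characterization (equivalently Weyl's inequality),
\begin{align*}
\max_i\lambda_i(\mathbf{D}_t)
&=\max_{\|\vv\|=1}\left(\vvt\mathbf{A}_t\vv+(\vxti{t}\vv)^2\right)\\
&\leq\max_i\lambda_i(\mathbf{A}_t)+\normt{\vxi{t}}
\leq\frac{\max_i\lambda_i(\mathbf{D}_{t-1})\,c}{\max_i\lambda_i(\mathbf{D}_{t-1})+c}+X^2~.
\end{align*}
The right-hand side is exactly $f\!\left(\max_i\lambda_i(\mathbf{D}_{t-1})\right)$ with the identifications $\beta=c$ and $\gamma^2=X^2$ (so that $x^2=\normt{\vxi{t}}\leq X^2=\gamma^2$), where $f$ is the function of \lemref{operator_scalar}.

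Finally I would close the argument by induction on $t$ with target bound $B^\star=\max\left\{\tfrac{3X^2+\sqrt{X^4+4X^2c}}{2},\,b+X^2\right\}$. For the base case $t=1$, \eqref{D} gives $\mathbf{D}_1=b\mathbf{I}+\vxi{1}\vxti{1}$, whose largest eigenvalue is $b+\normt{\vxi{1}}\leq b+X^2\leq B^\star$. For the inductive step, assuming $\max_i\lambda_i(\mathbf{D}_{t-1})\leq B^\star$, part (3) of \lemref{operator_scalar} applied to the displayed bound yields
\[
\max_i\lambda_i(\mathbf{D}_t)\leq f\!\left(\max_i\lambda_i(\mathbf{D}_{t-1})\right)\leq\max\left\{\max_i\lambda_i(\mathbf{D}_{t-1}),\ \frac{3X^2+\sqrt{X^4+4X^2c}}{2}\right\}~,
\]
and both arguments of the outer maximum are at most $B^\star$ — the first by the inductive hypothesis, the second by definition of $B^\star$ — which gives $\max_i\lambda_i(\mathbf{D}_t)\leq B^\star$ and completes the induction.

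The hard part will be the non-commutativity of the rank-one term $\vxi{t}\vxti{t}$ with $\mathbf{A}_t$: the eigenvalues of $\mathbf{D}_t$ are \emph{not} simply the images under $f$ of those of $\mathbf{D}_{t-1}$, so one cannot diagonalize once and iterate a scalar map. I expect this to be the main obstacle, and the way around it is to track only the \emph{top} eigenvalue through the Rayleigh quotient, which turns the additive rank-one perturbation into the clean additive $+X^2$ term and lets the scalar \lemref{operator_scalar} carry the rest. Part (3) of that lemma is precisely the statement that the map $\lambda\mapsto f(\lambda)$ cannot push the top eigenvalue above the threshold $\tfrac{3X^2+\sqrt{X^4+4X^2c}}{2}$ once it lies below $B^\star$, which is what upgrades the per-step bound into a uniform (in $t$) bound rather than one growing with $T$.
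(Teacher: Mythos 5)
Your proof is correct and takes essentially the same route as the paper's: an induction on $t$ that feeds the matrix recursion \eqref{D} into the scalar map of \lemref{operator_scalar} (part (3)) by replacing the rank-one term with $X^2$. The only cosmetic difference is that the paper bounds $\vxi{t}\vxti{t}\preceq \normt{\vxi{t}}\,\mi$ in the Loewner order and applies the scalar lemma to every eigenvalue in the eigenbasis of $\mdi{t-1}$, whereas you track only the top eigenvalue via the Rayleigh quotient/Weyl inequality together with the monotonicity of $\lambda\mapsto\lambda c/(\lambda+c)$.
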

\begin{proof}
  By induction.  From \eqref{D} we have that
  $\lambda_i(\mathbf{D}_{1}) \leq b + X^2$  for
$i=1\comdots d$. We proceed with a proof for some $t$. For simplicity,
denote by $\lambda_i = \lambda_i(\mdi{t-1})$ the i$th$
eigenvalue of $\mdi{t-1}$ with a corresponding
eigenvector $\vvi{i}$. 
From 
\eqref{D} we have,
\begin{align}
\mathbf{D}_{t}
&=\left(\mathbf{D}_{t-1}^{-1}+c^{-1}\mathbf{I}\right)^{-1}+\mathbf{x}_{t}\mathbf{x}_{t}^{\top}\nonumber\\
& \preceq \left(\mathbf{D}_{t-1}^{-1}+c^{-1}\mathbf{I}\right)^{-1}  +
\mi \normt{\vxi{t}}\nonumber\\
 &= \sum_i^d \vvi{i}
\vvti{i}\paren{ \frac{\lambda_i c }{\lambda_i + c} +
  \normt{\vxi{t}}} ~.\label{bound_eigens}
\end{align}
Plugging  \lemref{operator_scalar} in \eqref{bound_eigens} we get, 
\(
\mathbf{D}_{t} 
\preceq \sum_i^d \vvi{i}
\vvti{i}\max\braces{ \frac{3X^2 +
  \sqrt{X^4+4X^2 c}}{2},b+X^2} 
= \max\braces{ \frac{3X^2 +
  \sqrt{X^4+4X^2 c}}{2},b+X^2} \mi~.
\)
\end{proof}

Finally, equipped with the above lemmas we 
prove the main
result of this section.
\begin{corollary}
\label{cor:main}
Assume $\normt{\vxi{t}}\leq X^2$, $\vert\yi{t}\vert \leq Y$. Then, 
\begin{align}
L_T(\textrm{LASER})\leq 
  b\left\Vert
     \mathbf{u}_{1}\right\Vert ^{2}+L_T(\{\vui{t}\})+Y^{2} \ln\left|\frac{1}{b}\mathbf{D}_{T}\right|\nonumber\\ +c^{-1}Y^2\tr\paren{\mathbf{D}_0}+c V\nonumber\\
 +c^{-1}Y^2 T d  \max\braces{ \frac{3X^2 +
   \sqrt{X^4+4X^2 c}}{2},b+X^2}~.\label{final_cor}
\end{align} 

Furthermore, set
$b=\varepsilon c$ for some $0<\varepsilon<1$.
Denote by 
\(
\mu = \max\braces{9/8X^2, \frac{\paren{b+X^2}^2}{8X^2}}
\) and 
\(M =
\max\braces{3X^2, b+X^2}
\).
If $V \leq T \frac{\sqrt{2}Y^2dX}{\mu^{3/2}}$ (low drift) then by setting 
\begin{align}
c= \paren{{\sqrt{2}T Y^2 d X}/{V}}^{2/3}\label{c1}
\end{align}
 we have,
\begin{align}
& L_T(\textrm{LASER})
\leq\nonumber\\ 
& \quad b\left\Vert \mathbf{u}_{1}\right\Vert ^{2} + 3\paren{\sqrt{2}Y^2 d X}^{2/3} T^{2/3} V^{1/3}\nonumber\\
&\quad +\frac{\varepsilon}{1-\varepsilon}Y^{2}d +L_T(\{\vui{t}\})
+Y^{2} \ln\left|\frac{1}{b}\mathbf{D}_{T}\right|\label{bound1}~.
\end{align}
\end{corollary}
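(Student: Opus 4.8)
The plan is to chain the three results already in hand---\thmref{thm:basic_bound}, \lemref{lem:bound_1} and \lemref{eigen_values_lemma}---to obtain the explicit inequality \eqref{final_cor}, and then to optimize the free parameter $c$ under the low-drift hypothesis to reach \eqref{bound1}. For the first part I would start from \thmref{thm:basic_bound}, discard the outer $\min$ by fixing the competitor sequence $\{\vui{t}\}$, and rewrite the term $Y^{2}\sum_{t}\mathbf{x}_{t}^{\top}\mathbf{D}_{t}^{-1}\mathbf{x}_{t}$ using \lemref{lem:bound_1} as $Y^{2}\ln\abs{\frac{1}{b}\mathbf{D}_{T}}+c^{-1}Y^{2}\sum_{t=1}^{T}\tr\paren{\mathbf{D}_{t-1}}$. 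I then split the trace sum into the closed-form $t=1$ term $\tr\paren{\mathbf{D}_{0}}$ and the tail $\sum_{t=1}^{T-1}\tr\paren{\mathbf{D}_{t}}$, bounding each $\tr\paren{\mathbf{D}_{t}}$ (for $t\ge1$) by $d\max\braces{\frac{3X^{2}+\sqrt{X^{4}+4X^{2}c}}{2},b+X^{2}}$ via \lemref{eigen_values_lemma} and replacing $T-1$ by $T$. Collecting terms gives \eqref{final_cor} verbatim.

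For the second part I would specialize $b=\varepsilon c$. Since $\mathbf{D}_{0}=\frac{bc}{c-b}\mathbf{I}=\frac{\varepsilon c}{1-\varepsilon}\mathbf{I}$, the term $c^{-1}Y^{2}\tr\paren{\mathbf{D}_{0}}$ collapses to exactly $\frac{\varepsilon}{1-\varepsilon}Y^{2}d$, while $b\normt{\vui{1}}$, $L_{T}(\{\vui{t}\})$ and $Y^{2}\ln\abs{\frac{1}{b}\mathbf{D}_{T}}$ pass through unchanged. What remains is to control $cV+c^{-1}Y^{2}Td\,E$, where $E=\max\braces{\frac{3X^{2}+\sqrt{X^{4}+4X^{2}c}}{2},b+X^{2}}$ is the eigenvalue bound.

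The crux is to show that in the low-drift regime the awkward $\max$ in $E$ collapses to the clean form $E\le 2\sqrt{2}\,X\sqrt{c}$. I would argue this termwise: because $\mu\ge\frac{(b+X^{2})^{2}}{8X^{2}}$, the inequality $c\ge\mu$ immediately gives $b+X^{2}\le 2\sqrt{2}\,X\sqrt{c}$; and because $\mu\ge\frac{9}{8}X^{2}$, squaring and solving a quadratic in $\sqrt{c}$ shows that $c\ge\frac{9}{8}X^{2}$ forces $\frac{3X^{2}+\sqrt{X^{4}+4X^{2}c}}{2}\le 2\sqrt{2}\,X\sqrt{c}$ as well. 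The key observation making everything fit is that, for the prescribed $c=\paren{\sqrt{2}TY^{2}dX/V}^{2/3}$ of \eqref{c1}, the stated hypothesis $V\le T\frac{\sqrt{2}Y^{2}dX}{\mu^{3/2}}$ is algebraically equivalent to $c\ge\mu$; thus the low-drift condition is precisely the regime in which $E\le 2\sqrt{2}\,X\sqrt{c}$ is available.

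Finally I would substitute this bound to get $cV+c^{-1}Y^{2}Td\,E\le cV+2\sqrt{2}\,XY^{2}Td\,c^{-1/2}$. Writing $A=\sqrt{2}TY^{2}dX$, the choice $c=\paren{A/V}^{2/3}$ is precisely the one-variable minimizer of $cV+2Ac^{-1/2}$, yielding $cV=A^{2/3}V^{1/3}$ and $2Ac^{-1/2}=2A^{2/3}V^{1/3}$, so their sum is $3A^{2/3}V^{1/3}=3\paren{\sqrt{2}Y^{2}dX}^{2/3}T^{2/3}V^{1/3}$, the leading term of \eqref{bound1}. The main obstacle is the eigenvalue step: verifying that the $\max\braces{\cdot,\cdot}$ bound simplifies to an $O(\sqrt{c})$ expression with the sharp constant, and tying that simplification to the drift hypothesis through the identity ``low drift $\Leftrightarrow c\ge\mu$''; the remaining steps are bookkeeping and a routine balancing of two powers of $c$.
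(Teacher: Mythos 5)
Your proposal is correct and follows essentially the same route as the paper's own proof: chain \thmref{thm:basic_bound}, \lemref{lem:bound_1}, and \lemref{eigen_values_lemma} to get \eqref{final_cor}, use $b=\varepsilon c$ to collapse $c^{-1}Y^{2}\tr\paren{\mathbf{D}_0}$ to $\frac{\varepsilon}{1-\varepsilon}Y^{2}d$, observe that the low-drift hypothesis is equivalent to $c\geq\mu$ so that the eigenvalue bound reduces to $2\sqrt{2}X\sqrt{c}$, and then plug in \eqref{c1} to balance $cV$ against $c^{-1/2}$. The only cosmetic difference is that you verify the two branches of the $\max$ directly against $2\sqrt{2}X\sqrt{c}$, whereas the paper chains $\paren{3X^2+\sqrt{X^4+4X^2c}}/2\leq\paren{3X^2+\sqrt{8X^2c}}/2\leq\sqrt{8X^2c}$; both are valid.
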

The proof appears in \secref{proof_cor_main}.
A few remarks are in order. First, when the total drift $V=0$ goes to
zero, we set $c=\infty$ and thus we have $\mdi{t}=b\mi+\sum_{s=1}^t
\vxi{s}\vxti{s}$ used in recent
algorithms~\cite{Vovk01,Forster,Hayes,CesaBianchiCoGe05}. In this case
the algorithm reduces to the algorithm by Forster~\cite{Forster} (which is also the Aggregating Algorithm for Regression of Vovk~\cite{Vovk01}), with
the same logarithmic regret bound (note that the last term of \eqref{bound1} is logarithmic in $T$, see the proof of
Forster~\cite{Forster}). See also the work of Azoury and Warmuth~\cite{AzouryWa01}.
Second, substituting $V=T\nu$ we get that the
bound depends on the average drift as $T^{2/3}
(T\nu)^{1/3}=T\nu^{1/3}$. Clearly, to have a sublinear regret we must
have $\nu=o(1)$.
Third, Vaits and Crammer~\cite{VaitsCr11} recently proposed an
algorithm, called ARCOR, for the same setting. The regret of ARCOR
depends on the total drift as $\sqrt{T V'}\log(T)$, where their
definition of total drift is a sum of the Euclidean differences
$V'=\sum_t^{T-1} \Vert \vui{t+1}-\vui{t}\Vert$, rather than the
squared norm. When the instantaneous drift
$\Vert\vui{t+1}-\vui{t}\Vert$ is constant, this notion of total drift
is related to our average drift, $V'=T\sqrt{\nu}$. Therefore, in this
case the bound of ARCOR~\cite{VaitsCr11} is $\nu^{1/4} T \log(T)$
which is worse than our bound, both since it has an additional
$\log(T)$ factor (as opposed to our additive log term) and since
$\nu=o(1)$. Therefore we expect that our algorithm will perform better
than ARCOR~\cite{VaitsCr11} when the instantaneous drift is
approximately constant. Indeed, the synthetic simulations described in
\secref{sec:simulations} further support this conclusion. Fourth,
Herbster and Warmuth \cite{HerbsterW01} developed shifting bounds for
general gradient descent algorithms with projection of the
weight-vector using the Bregman divergence. In their bounds, there is
a factor greater than 1 multiplying the term
$L_T\paren{\braces{\vui{t}}}$, leading to a small regret only when
the data is close to be realizable with linear models. Yet, their
bounds have better dependency on $d$, the dimension of the inputs $x$.
Busuttil and Kalnishkan~\cite{BusuttilK07} developed a variant of the
Aggregating Algorithm~\cite{vovkAS} for the non-stationary
setting. However, to have sublinear regret they require a strong
assumption on the drift $V=o(1)$, while we require only
$V=o(T)$. Fifth, if $V \geq T \frac{Y^2dM}{\mu^{2}}$ then by setting 
\(
c=
 \sqrt{{Y^2dMT}/{V}}
\) 
 we have,
\begin{align}
& L_T(\textrm{LASER})
\leq b\left\Vert \mathbf{u}_{1}\right\Vert ^{2}
 + 2\sqrt{Y^2 d TMV}\nonumber\\
& \quad+\frac{\varepsilon}{1-\varepsilon}Y^{2}d +L_T(\{\vui{t}\})
+Y^{2} \ln\left|\frac{1}{b}\mathbf{D}_{T}\right| \label{high_drift}
\end{align}
(See \appref{details_for_second_bound} for details). The last bound is linear in $T$ and can be obtained also by a naive
algorithm that outputs $\hat{y}_t=0$ for all $t$.

\section{An $H_\infty$ Algorithm for Online Regression}
\label{H8_sec}
Adaptive filtering is an active and well established area of research
in signal processing. Formally, it is equivalent to online learning. On
each iteration $t$ the filter receives an input $\vxi{t}\in\reals^d$ and
predicts a corresponding output $\hyi{t}$. It then receives the true
desired output $\yi{t}$ and updates its internal model. Many adaptive
filtering algorithms employ linear models, that is, at time $t$ they
output $\hyi{t} = \vwti{t}\vxi{t}$. For
example, a well known online learning algorithm~\cite{WidrowHoff}
for regression, which is basically a gradient-descent algorithm with
the squared-loss, is known as the {\em least mean-square (LMS)}
algorithm in the adaptive filtering
literature~\cite{Sayed:2008:AF:1370975}.

One possible difference between adaptive filtering and online learning
can be viewed in the interpretation of algorithms, and as a
consequence, of their analysis. In online learning, the goal of an
algorithm is to make {\em predictions} $\hyi{t}$, and the predictions are
compared to the predictions of some function from a
known class (e.g. linear, parameteized
by $\vu$). Thus, a typical online performance bound relates the quality of the
algorithm's predictions with the quality of some function's $g(\vx)=\vut\vx$
predictions, using some non-negative loss measure $\ell(\vwti{t}\vxi{t},\yi{t})$. Such
bounds often have the following shape,
\[
\overbrace{\sum_t \ell(\vwti{t}\vxi{t},\yi{t})}^{\textrm{algorithm
    loss with respect to observation}} \leq
A \overbrace{\sum_t \ell( \vut\vxi{t}, \yi{t})}^{\textrm{function $\vu$ loss}} + B,
\]
for some multiplicative-factor $A$ and an additive factor $B$. 

Adaptive filtering is similar to the realizable setting in machine
learning, where it is assumed the existence of some filter and
the goal is to recover it using {\em noisy} observations. Often it is assumed that the output is a corrupted version of the output
of some function, $y=f(\vx)+n$, with some noise $n$. Thus a
typical bound relates the quality of an algorithm's predictions {\em
  with respect to the target filter} $\vu$ and the amount of noise in the problem,
\[
\overbrace{\sum_t
  \ell(\vwti{t}\vxi{t},\vut\vxi{t})}^{\textrm{algorithm loss with
    respect to a reference}} \leq
A \overbrace{\sum_t \ell( \vut\vxi{t}, \yi{t})}^{\textrm{amount of
    noise}} + B ~.
\]

The $H_\infty$ filters~(see e.g. papers by Simon~\cite{Simon:2006:OSE:1146304,DBLP:journals/tsp/Simon06})
 are a family of (robust) linear filters developed based on a min-max approach, like LASER, and analyzed in the worst
case setting. These filters are reminiscent of the celebrated Kalman
filter~\cite{Kalman60}, which was motivated and analyzed in a stochastic
setting with Gaussian noise. A pseudocode of one such filter we {\em modified }to online linear
regression appears in \figref{algorithm:hi}.
Theory of $H_\infty$ filters states~\cite[Section 11.3]{Simon:2006:OSE:1146304} the
following bound on its performance as a filter.
\begin{theorem}
Assume the filter is executed with parameters $a>1$ and $b,c>0$.
Then, for all input-output pairs $(\vxi{t},\yi{t})$ and for all
reference vectors $\vui{t}$ the following bound holds on the filter's performance,
\(
\sum_{t=1}^{T}\left(\mathbf{x}_{t}^{\top}\mathbf{w}_{t}-\vxti{t}\mathbf{u}_{t}\right)^{2}
\leq
a L_T(\{\vui{t}\}) 
+b\left\Vert
  \mathbf{u}_{1}\right\Vert
^{2}+c
V_T\paren{\braces{\vui{t}}}
~.
\)
\end{theorem}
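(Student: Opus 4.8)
The plan is to recognize the algorithm of \figref{algorithm:hi} as the a~posteriori $H_\infty$ filter for a particular state--space model and then read off the stated inequality from the quoted guarantee \cite[Section 11.3]{Simon:2006:OSE:1146304}; to make this self--contained one then unpacks the storage--function (Lyapunov) argument that underlies that theorem. I first set up the model behind the reduction. Take the hidden state to be the reference vector $\vui{t}$, with random--walk dynamics $\vui{t+1}=\vui{t}+\mb{v}_t$ whose process noise $\mb{v}_t=\vui{t+1}-\vui{t}$ is weighted by $c$ (so that $\sum_t c\,\normt{\mb{v}_t}=c\,V_T\paren{\braces{\vui{t}}}$); observation $\yi{t}=\vxti{t}\vui{t}+n_t$ with measurement noise $n_t=\yi{t}-\vxti{t}\vui{t}$ weighted by $a$ (so that $a\sum_t n_t^2=a\,L_T(\{\vui{t}\})$); quantity to be estimated $\vxti{t}\vui{t}$; and prior $\mathbf{w}_0=\vzero$, $\mathbf{P}_0=b^{-1}\mi$ (so that the prior mismatch is $\vui{1}^\top(b\mi)\vui{1}=b\normt{\vui{1}}$). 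A direct check that the updates for $\tilde{\mathbf{P}}_t$, $\mathbf{w}_t$ and $\mathbf{P}_t$ coincide with the $H_\infty$ measurement-- and time--update equations for this model, with the level matched to $a$, turns the stated bound into a literal instance of Simon's theorem.

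To exhibit the mechanism I would set up a quadratic storage function $\Phi_t=(\mathbf{w}_t-\vui{t+1})^\top\mathbf{P}_t^{-1}(\mathbf{w}_t-\vui{t+1})$, with the convention $\vui{T+1}=\vui{T}$ (which leaves $V_T$ unchanged). The boundary values are immediate, $\Phi_0=\vui{1}^\top(b\mi)\vui{1}=b\normt{\vui{1}}$ and $\Phi_T\ge 0$, so it suffices to chain per--step estimates whose sum telescopes to $\Phi_0-\Phi_T$ and to control, round by round, the filtered error $\paren{\vxti{t}\mathbf{w}_t-\vxti{t}\vui{t}}^2$ against $a\,n_t^2$ and $c\,\normt{\vui{t+1}-\vui{t}}$. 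I would split each round into a time/drift update and a measurement update and treat them separately.

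The drift update is the clean component. Since $\mathbf{P}_t=\tilde{\mathbf{P}}_t+c^{-1}\mi$, writing $\delta=\mathbf{w}_t-\vui{t}$ and $\mb{g}=\vui{t+1}-\vui{t}$ (so that $\delta-\mb{g}=\mathbf{w}_t-\vui{t+1}$), I would establish the identity $(\delta-\mb{g})^\top\mathbf{P}_t^{-1}(\delta-\mb{g})\le \delta^\top\tilde{\mathbf{P}}_t^{-1}\delta+c\,\normt{\mb{g}}$ by simultaneous diagonalization of $\tilde{\mathbf{P}}_t$: in each eigendirection the residual $2\times2$ quadratic form has a vanishing Schur complement, hence the corresponding block matrix is positive semidefinite and the inequality holds. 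This is precisely what produces the $c\,V_T\paren{\braces{\vui{t}}}$ term upon summation.

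The measurement update is the main obstacle. Using $\tilde{\mathbf{P}}_t^{-1}=\mathbf{P}_{t-1}^{-1}+(a-1)\vxi{t}\vxti{t}$ and the gain $a\tilde{\mathbf{P}}_t\vxi{t}$, one expands $\delta^\top\tilde{\mathbf{P}}_t^{-1}\delta$ against $\Phi_{t-1}=(\mathbf{w}_{t-1}-\vui{t})^\top\mathbf{P}_{t-1}^{-1}(\mathbf{w}_{t-1}-\vui{t})$ and completes the square in the residual $\yi{t}-\vxti{t}\mathbf{w}_{t-1}$, where the factor $a>1$ enters. The delicate point is that a naive storage weight $\mathbf{P}_t^{-1}$ does \emph{not} telescope against the \emph{filtered} error on its own: expanding leaves a residual linear in $\vxti{t}(\mathbf{w}_{t-1}-\vui{t})$ that is not sign--definite for arbitrary references. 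The $H_\infty$ remedy, which I would follow, is to carry the estimand correction inside the storage weight and to invoke the filter's existence (positivity) condition — that $\mathbf{P}_{t-1}^{-1}+(a-1)\vxi{t}\vxti{t}$ stays positive definite and the associated covariance remains bounded along the run (whose a~priori analogue is $\vxti{t}\mathbf{P}_{t-1}\vxi{t}\le 1$) — which is exactly the hypothesis packaged into Simon's theorem and is what makes the completed--square remainder nonnegative. I would state this condition explicitly; once it is in force, chaining the two updates yields the per--round estimate and the telescoping sum completes the proof. The reduction route sidesteps this algebra by quoting the guarantee directly, so the only real work there is the bookkeeping that matches $(a,b,c)$ and the model weights to Simon's normalization.
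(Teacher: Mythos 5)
The paper never actually proves this theorem: its entire ``proof'' is the citation to Simon~\cite[Section 11.3]{Simon:2006:OSE:1146304}. So your first paragraph --- hidden state $\vui{t}$, identity dynamics, process-noise weight $c$, measurement-noise weight $a$, prior weight $b$, estimand $\vxti{t}\vui{t}$, level $\theta=1$, plus the check that the gain $a\tilde{\mathbf{P}}_t\vxi{t}$ and the recursions for $\tilde{\mathbf{P}}_t,\mathbf{P}_t$ are Simon's equations under this matching --- is exactly the paper's argument made explicit, and that matching is correct. Your storage-function unpacking goes beyond anything in the paper; its drift half (the inf-convolution inequality $(\delta-\mb{g})^\top(\tilde{\mathbf{P}}_t+c^{-1}\mi)^{-1}(\delta-\mb{g})\le\delta^\top\tilde{\mathbf{P}}_t^{-1}\delta+c\normt{\mb{g}}$) is correct, while the measurement half is described but never executed.

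That remaining gap, however, is not just unfinished algebra: it cannot be filled for the theorem as printed, and your own remark about the existence condition is the reason. You should separate two conditions that you run together. Positivity of $\mathbf{P}_{t-1}^{-1}+(a-1)\vxi{t}\vxti{t}$ is automatic for $a>1$ and buys nothing; the operative hypothesis is the spectral condition $\vxti{t}\mathbf{P}_{t-1}\vxi{t}\le 1$, and it is \emph{not} implied by $a>1$, $b,c>0$. Concretely, with $e_t=\yi{t}-\vxti{t}\mathbf{w}_{t-1}$ and $s_t=\vxti{t}\tilde{\mathbf{P}}_t\vxi{t}$, the per-round measurement inequality is an exact identity plus a completed-square remainder proportional to $e_t^2\,\bigl(1-a s_t\,\chi(s_t)\bigr)$ (with $\chi\equiv 1$ for the a~priori error, $\chi(s_t)=1-s_t+s_t^2$ for the stated one); since the adversary controls $e_t$, the dissipation inequality holds precisely when such a condition holds, and $1-as_t\ge 0$ is equivalent to $\vxti{t}\mathbf{P}_{t-1}\vxi{t}\le 1$. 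Without it the statement is simply false: take $d=1$, $T=1$, $a=2$, $b=1$, any $c>0$, $\vxi{1}=2$, $\yi{1}=2$, $\vui{1}=1$. The algorithm gives $\tilde{\mathbf{P}}_1=1/5$, $\mathbf{w}_1=8/5$, so the left-hand side is $(16/5-2)^2=36/25$, while the right-hand side is $2\cdot 0+1\cdot 1+c\cdot 0=1$ (note $\vxti{1}\mathbf{P}_0\vxi{1}=4>1$ here). So your plan, completed under the condition you propose to ``state explicitly,'' proves a corrected theorem --- which is how Simon states his, with the positivity condition as a hypothesis --- but no argument can prove the unconditional statement; the paper's proof-by-citation silently drops that hypothesis (and also trades Simon's a~priori error $\vxti{t}(\vui{t}-\mathbf{w}_{t-1})$ for the a~posteriori $\vxti{t}(\vui{t}-\mathbf{w}_{t})$, another piece of ``bookkeeping'' that is not innocent). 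Your instinct to surface the condition is exactly right; the one correction to your write-up is that it is a genuine extra assumption, not something ``in force'' by virtue of $a>1$.
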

From the theorem we establish a regret bound for the $H_\infty$
algorithm to online learning.
\begin{corollary}
Fix $\alpha>0$.
The total squared-loss suffered by the algorithm  
is bounded by
\begin{eqnarray}
L_T(H_\infty)&\leq&\left(1+{1}/{\alpha}+\left(1+\alpha\right)a\right) L_T(\braces{\vui{t}})
\label{bound_h8}\\
&&+\left(1+\alpha\right)b\left\Vert
   \mathbf{u}_{1}\right\Vert ^{2}+\left(1+\alpha\right)c
V_T\paren{\braces{\vui{t}}} ~.
\nonumber
\end{eqnarray}

\end{corollary}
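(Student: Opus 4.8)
The plan is to bound the algorithm's instantaneous loss $\paren{\hyi{t}-\yi{t}}^{2}$ by splitting the error into the deviation of the algorithm's prediction from the reference prediction, which the filter theorem controls, plus the loss of the reference itself, which simply accumulates into $L_T(\braces{\vui{t}})$. The only algebraic tool needed is the weighted ``almost triangle'' inequality: for any $\alpha>0$ and any reals $p,q$,
\[
(p+q)^2 \leq (1+\alpha)\,p^2 + \paren{1+\tfrac{1}{\alpha}}\,q^2 ,
\]
which follows from $2pq \leq \alpha p^2 + \alpha^{-1}q^2$ (AM--GM). This is exactly where the free parameter $\alpha$ of the statement enters: it tunes how much of the cross term is charged to each side.

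First I would apply this inequality per round with $p=\hyi{t}-\vxti{t}\vui{t}$ and $q=\vxti{t}\vui{t}-\yi{t}$, so that $p+q=\hyi{t}-\yi{t}$ and $(p+q)^2$ is the algorithm's loss on round $t$. Summing over $t=1\comdots T$ and using that $\sum_t\paren{\vxti{t}\vui{t}-\yi{t}}^{2}=L_T(\braces{\vui{t}})$ by definition gives
\[
L_T(H_\infty) \leq (1+\alpha)\sum_{t=1}^{T}\paren{\hyi{t}-\vxti{t}\vui{t}}^{2} + \paren{1+\tfrac{1}{\alpha}} L_T(\braces{\vui{t}}) .
\]

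Second, I would invoke the $H_\infty$ filter theorem stated just above, reading its left-hand side $\sum_t\paren{\vxti{t}\vwi{t}-\vxti{t}\vui{t}}^{2}$ as precisely the cumulative squared deviation of the algorithm's predictions $\hyi{t}$ from the reference predictions $\vxti{t}\vui{t}$. This bounds the first sum by $a\,L_T(\braces{\vui{t}}) + b\normt{\vui{1}} + c\,V_T(\braces{\vui{t}})$. Substituting and collecting the two contributions proportional to $L_T(\braces{\vui{t}})$ yields the coefficient $(1+\alpha)a + \paren{1+\tfrac{1}{\alpha}} = 1+\tfrac{1}{\alpha}+(1+\alpha)a$, while the norm and drift terms each pick up the factor $(1+\alpha)$, which is precisely \eqref{bound_h8}.

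The computation itself is routine; the one point that needs care is the identification used in the second step. The algorithm predicts with $\vwi{t-1}$ whereas the filter theorem is written with $\vwi{t}$, so I would make explicit that the theorem's left-hand side is the cumulative deviation of the algorithm's \emph{predictions} from the references, matching the ``algorithm loss with respect to a reference'' template discussed earlier in the section. Once that identification is fixed, no further obstacle remains, and $\alpha$ is deliberately left free because it trades the multiplicative blow-up $1+\tfrac{1}{\alpha}+(1+\alpha)a$ on $L_T(\braces{\vui{t}})$ against the additive $(1+\alpha)$ scaling of the norm and drift penalties.
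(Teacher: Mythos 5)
Your proof is correct and is essentially the paper's own argument: the paper applies the very same weighted inequality per round (quoting it as Lemma 4 of Hassibi and Kailath, i.e.\ $\left(y_{t}-\mathbf{x}_{t}^{\top}\mathbf{w}_{t}\right)^{2}\leq\left(1+\frac{1}{\alpha}\right)\left(y_{t}-\mathbf{x}_{t}^{\top}\mathbf{u}_{t}\right)^{2}+\left(1+\alpha\right)\left[\mathbf{x}_{t}^{\top}\left(\mathbf{w}_{t}-\mathbf{u}_{t}\right)\right]^{2}$, which is your AM--GM step), then plugs in the filter theorem and collects terms exactly as you do. The one thing you add is the explicit flagging of the $\mathbf{w}_{t}$ versus $\mathbf{w}_{t-1}$ indexing when identifying the theorem's left-hand side with the algorithm's prediction loss --- a point the paper passes over silently.
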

\begin{proof}
Using a bound of Hassibi and Kailath~\cite[Lemma 4]{HassibiKa97} we have that for all $\alpha>0$,
\(
\left(y_{t}-\mathbf{x}_{t}^{\top}\mathbf{w}_{t}\right)^{2}\leq\left(1+\frac{1}{\alpha}\right)\left(y_{t}-\mathbf{x}_{t}^{\top}\mathbf{u}_{t}\right)^{2}+\left(1+\alpha\right)\left[\mathbf{x}_{t}^{\top}\left(\mathbf{w}_{t}-\mathbf{u}_{t}\right)\right]^{2}
\). 
Plugging back into the theorem and collecting the terms we get the
desired bound.
\end{proof}
The bound holds for any $\alpha>0$. We plug 
$\alpha = \sqrt{{L_T\paren{\braces{\vui{t}}}}/\paren{a L_T\paren{\braces{\vui{t}}} + cV + b \normt{\vui{1}}}}$
%
in \eqref{bound_h8} to get,
\begin{align*}
L_T(H_\infty) 
\leq&~ (1+a)  L_T\paren{\braces{\vui{t}}} + cV + b
\normt{\vui{1}} \\
&+ 2 \sqrt{\paren{a L_T\paren{\braces{\vui{t}}} + cV + b
    \normt{\vui{1}}}{L_T\paren{\braces{\vui{t}}}}}\\
\leq&~ (1+a+2\sqrt{a})  L_T\paren{\braces{\vui{t}}} + cV + b
\normt{\vui{1}} \\
&+ 2 \sqrt{\paren{cV + b
    \normt{\vui{1}}}{L_T\paren{\braces{\vui{t}}}}} ~.
\end{align*}
Intuitively, we expect the $H_\infty$ algorithm to perform better when
the data is close to linear, that is when
$L_T\paren{\braces{\vui{t}}}$ is small, as, conceptually, it was
designed to minimize a loss with respect to weights $\{\vui{t}\}$.  On
the other hand, LASER is expected to perform better when the data is
hard to predict with linear models, as it is not motivated from this
assumption. Indeed, the bounds reflect these observations.

Comparing the last bound with \eqref{bound1} we note a few
differences. First, the factor $\paren{1+a+2\sqrt{a}}\geq4$ of
$L_T\paren{\braces{\vui{t}}}$ is worse for $H_\infty$ than for
LASER, which is a unit. Second, LASER has worse
dependency in the drift $T^{2/3}V^{1/3}$, while  for $H_\infty$ it is
about $cV + 2 \sqrt{{cV }{L_T\paren{\braces{\vui{t}}}}}$. Third, the
$H_\infty$ has an additive factor $\sim
\sqrt{L_T\paren{\braces{\vui{t}}}}$, while LASER has an
additive 
logarithmic factor, at most.

Hence, the bound of the $H_\infty$ based algorithm is
better when the cumulative loss $L_T\paren{\braces{\vui{t}}}$ is
small. In this case, $4 L_T\paren{\braces{\vui{t}}}$ is not a large
quantity, and as all the other quantities behave like
$\sqrt{L_T\paren{\braces{\vui{t}}}}$, they are small as well. On the other
hand, if $L_T\paren{\braces{\vui{t}}}$ is large, and is linear in $T$, the first term of the
bound becomes dominant, and thus the factor of $4$ for the $H_\infty$
algorithm makes its bound higher than that of LASER. Both bounds were obtained from a min-max approach, either directly (LASER) or
via-reduction from filtering ($H_\infty$). The bound of the former is
lower in hard problems.  Kivinen et al.~\cite{KivinenWaHa03} proposed
 another approach for filtering with a bound depending on $\sum_t
 \Vert \vui{t} \!-\! \vui{t-1} \Vert$ and not the sum of squares as we
 have both for LASER and the $H_\infty$-based algorithm.

\section{Simulations}
\vspace{-0.15cm}
\label{sec:simulations}

\begin{figure}[t!]
\subfigure{\includegraphics[width=0.24\textwidth]{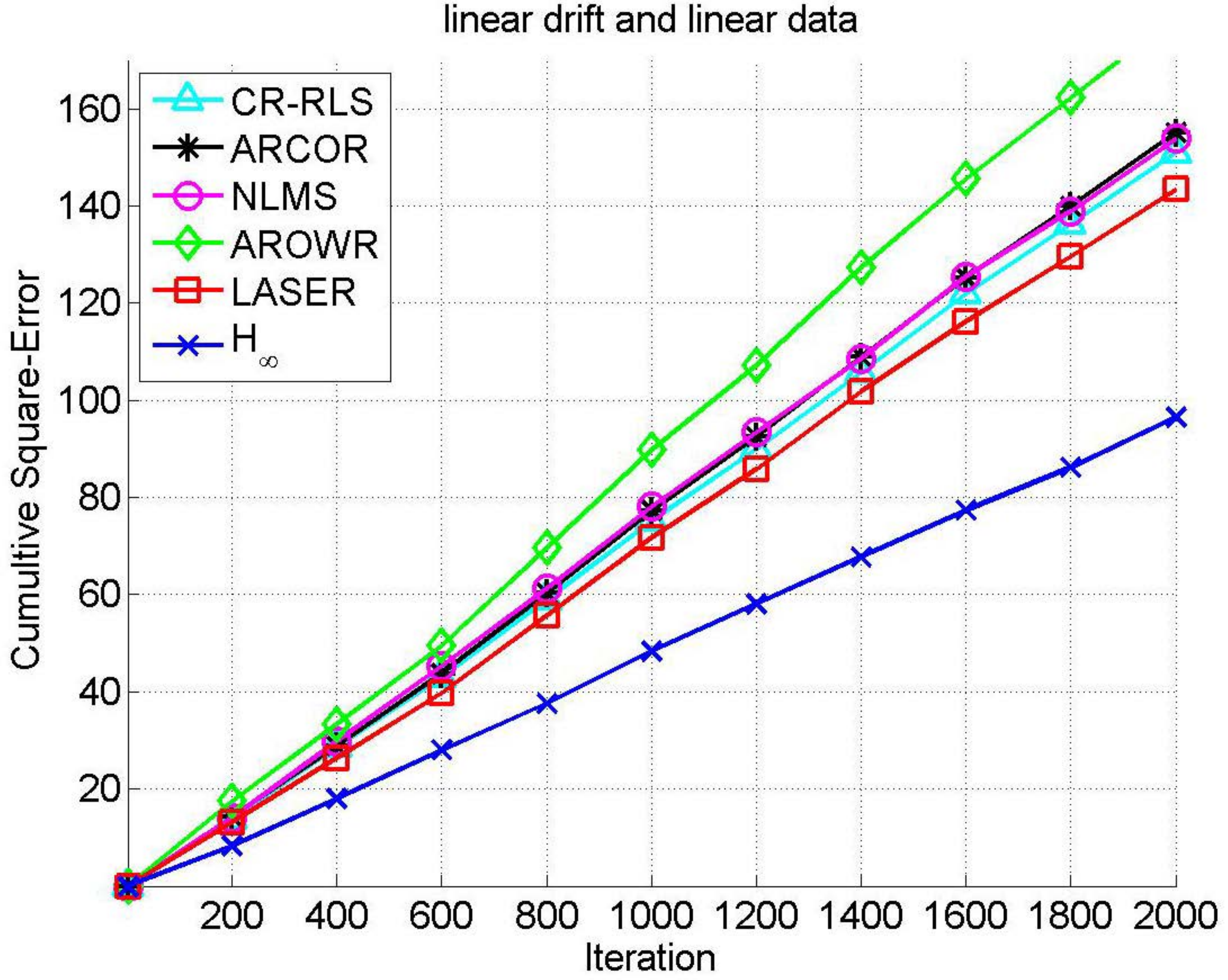}}
\!\!
\subfigure{\includegraphics[width=0.24\textwidth]{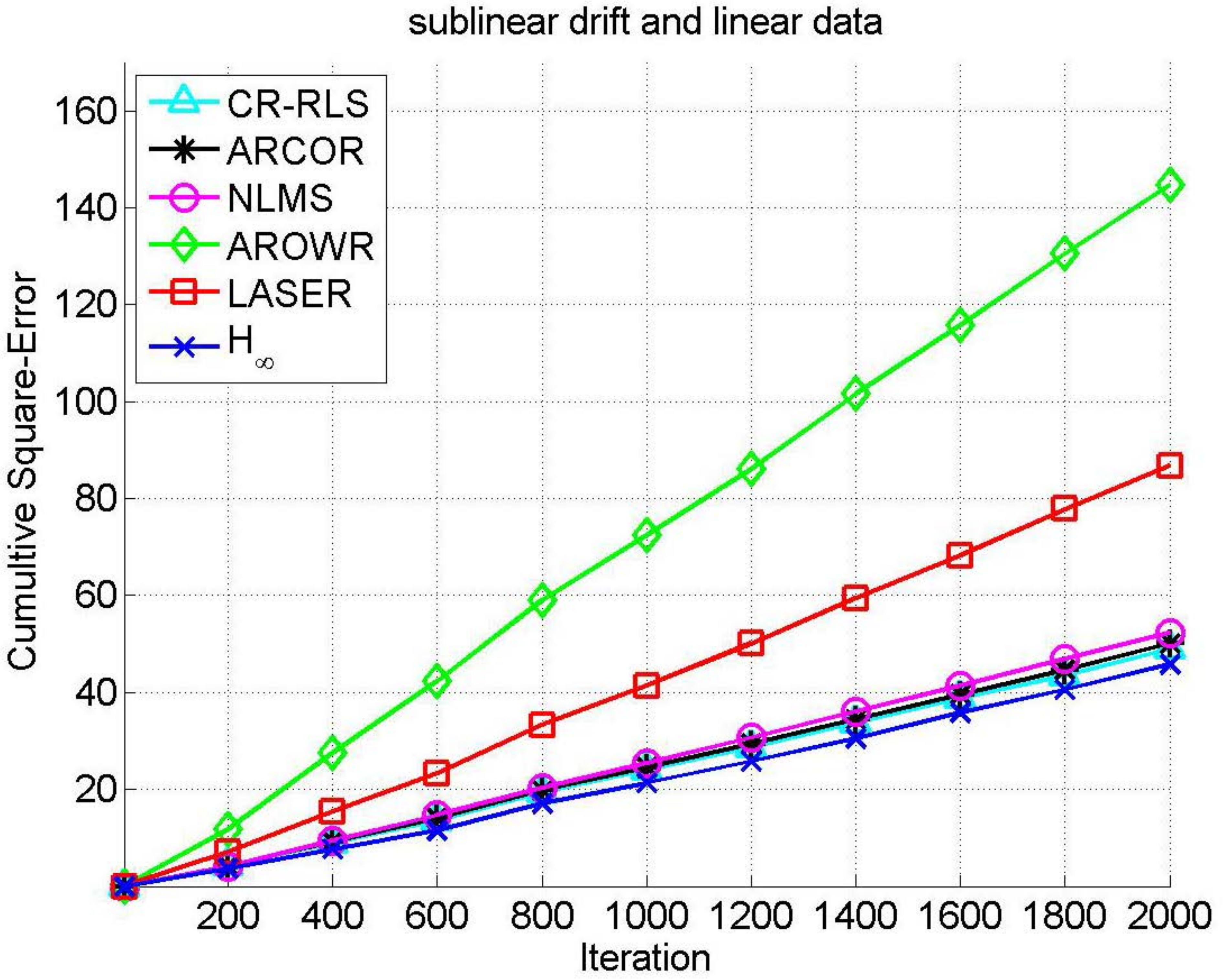}}
\subfigure{\includegraphics[width=0.24\textwidth]{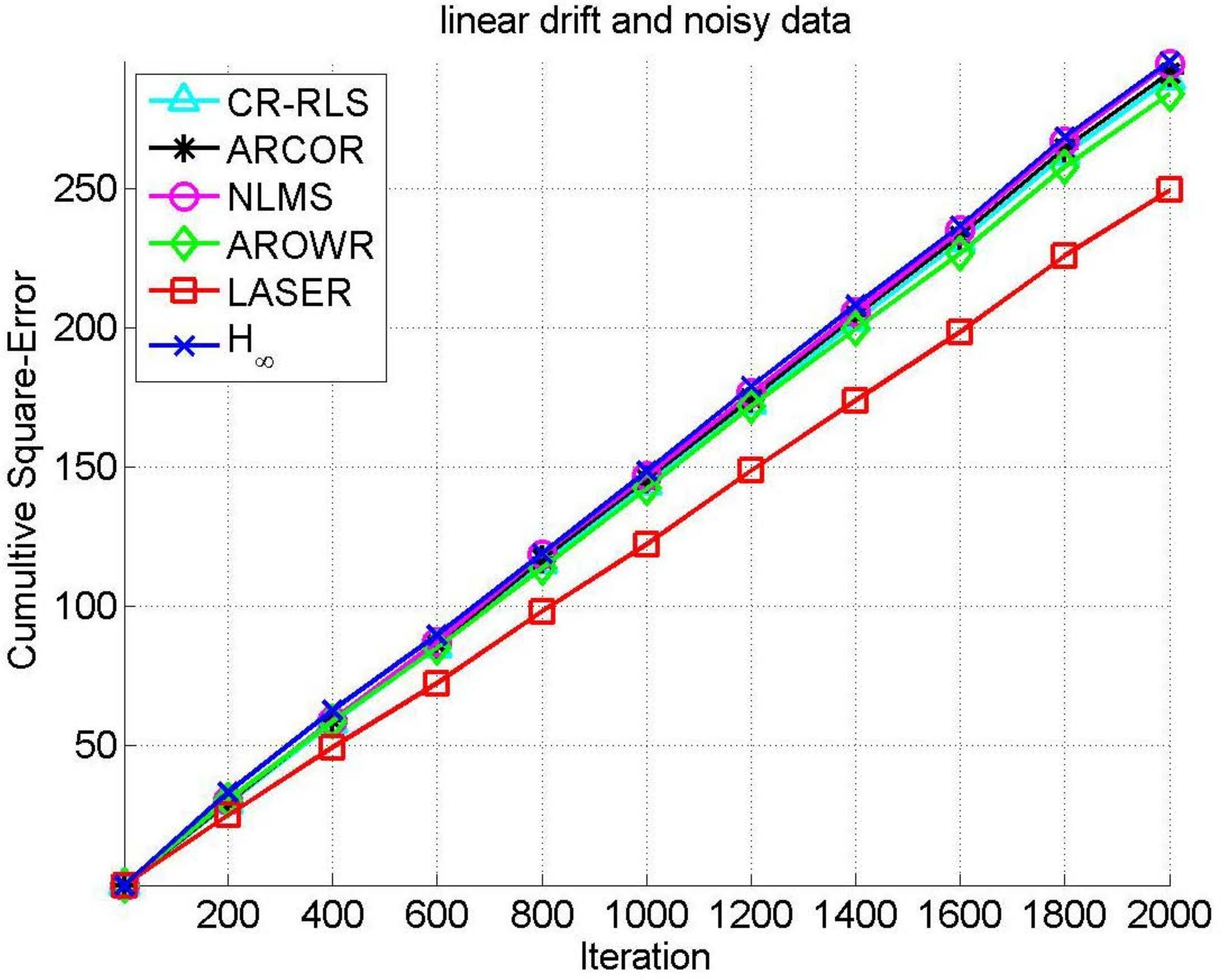}}
\!\!
\subfigure{\includegraphics[width=0.24\textwidth]{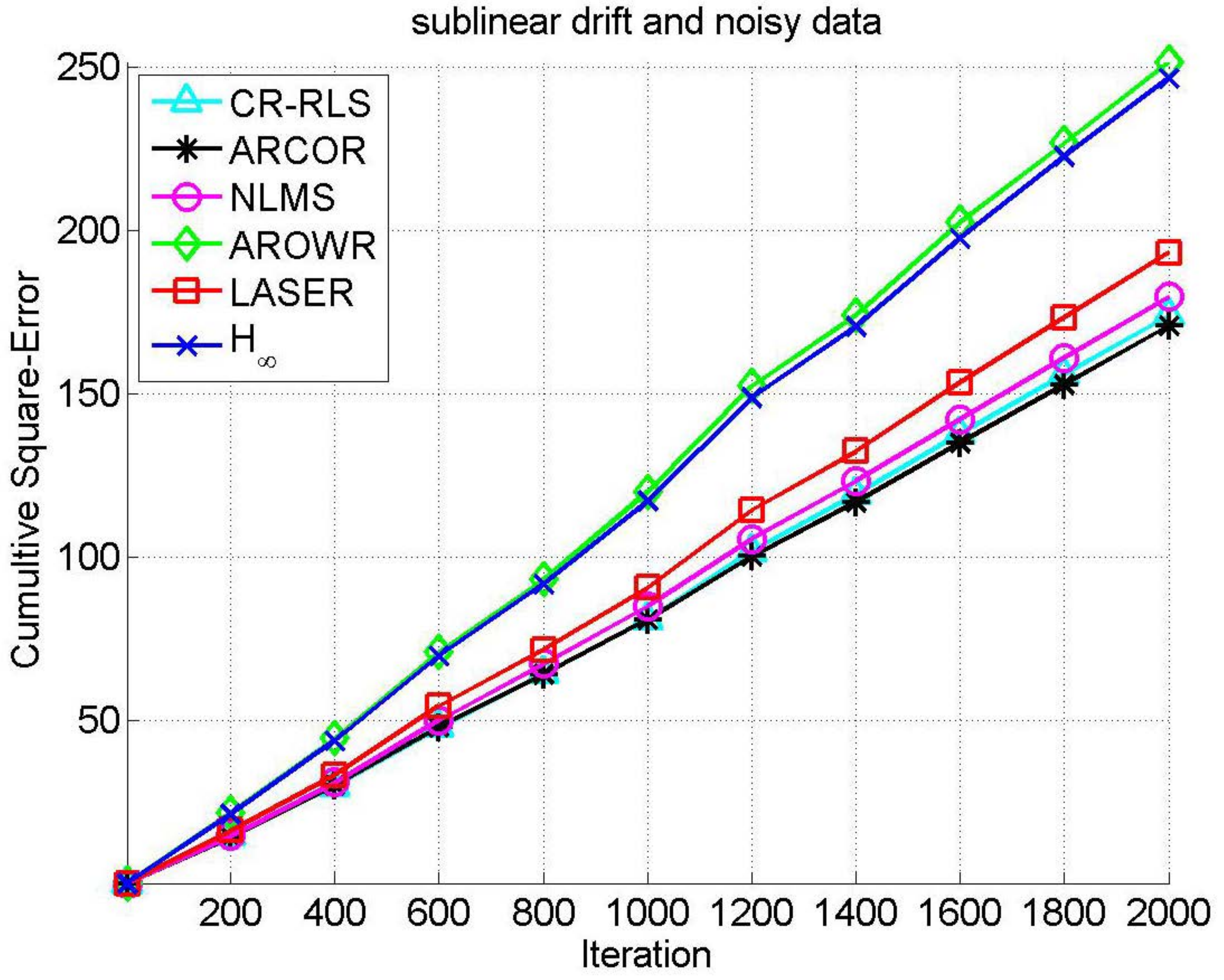}}
\caption{Cumulative squared loss for AROWR, ARCOR, NLMS, CR-RLS, LASER and $H_\infty$ vs iteration. Top left - linear drift and linear data, top right - sublinear drift and linear data, bottom left - linear drift and noisy data, bottom right - sublinear drift and noisy data.}
\label{fig:sims}
\end{figure}

We evaluate the LASER and $H_\infty$ algorithms on four synthetic datasets.
We set $T=2000$ and $d=20$. For all datasets, the
inputs $\mathbf{x}_t\in\reals^{20}$ were generated such that
the first ten coordinates were grouped into five groups of size
two. Each such pair was drawn from a $45^\circ$ rotated Gaussian
distribution with standard deviations $10$ and $1$. The remaining $10$
coordinates were drawn from independent Gaussian distributions
$\norm\paren{0,2}$. The first synthetic dataset was
generated using a sequence of vectors $\vui{t}\in\reals^{20}$ for
which the only non-zero coordinates are the first two, where their
values are the coordinates of a unit vector that is rotating with a
constant rate (linear drift). Specifically, we have $\Vert\vui{t}\Vert=1$ and the instantaneous drift $\Vert\vui{t}-\vui{t-1}\Vert$ is constant.  The
second synthetic dataset was generated using a sequence
of vectors $\vui{t}\in\reals^{20}$ for which the only non-zero
coordinates are the first two. This vector in $\reals^2$ is of unit
norm $\Vert\vui{t}\Vert=1$ and rotating in a rate of $t^{-1}$ (sublinear drift). In addition every $50$ time-steps the two-dimensional vector defined
above was ``embedded'' in different pair of coordinates of the
reference vector $\vui{t}$, for the first $50$ steps it were
coordinates $1,2$, in the next $50$ examples, coordinates $3,4$, and
so on. This change causes a switch in the reference vector $\vui{t}$. For the first two datasets we set $\yi{t}=\vxti{t}\vui{t}$ (linear data).
The third and fourth datasets are the same as first and second except we set $\yi{t}=\vxti{t}\vui{t}+n_t$ where
$n_t\sim\norm\paren{0,0.05}$ (noisy data).

We compared six algorithms: NLMS (normalized least mean
square)~\cite{Bershad,Bitmead} which is a state-of-the-art first-order
algorithm, AROWR (AROW for Regression)~\cite{CrammerKuDr09},
ARCOR~\cite{VaitsCr11}, CR-RLS~\cite{Chen,Salgado}, LASER and $H_\infty$.  The
algorithms' parameters were tuned using a single random sequence. We
repeat each experiment $100$ times reporting the mean cumulative
square-loss. The results are summarized in \figref{fig:sims} (best viewed in color). 


For the first and third datasets (left plots of
\figref{fig:sims}) we observe the superior performance of the LASER algorithm over previous approaches. LASER has a good tracking ability, fast learning rate and it is designed to perform well in severe conditions like linear drift.

For the second and fourth datasets (right plots of
\figref{fig:sims}), where we have sublinear drift level, we get that ARCOR
outperforms LASER since it is especially designed for sublinear amount
of data drift, yet, $H_\infty$ outperforms ARCOR when there is no noise (top-right plot).

For the third and fourth datasets (bottom plots of \figref{fig:sims}), where we added noise to labels, the performance of $H_\infty$ degrades, as expected from our discussion in \secref{H8_sec}.

\section{Related Work}
The problem of performing online regression was studied for more than
fifty years in statistics, signal processing and machine learning. We
already mentioned the work of Widrow and Hoff~\cite{WidrowHoff} who studied a gradient
descent algorithm for the squared loss. Many variants of the algorithm
were studied since then. A notable example is the 
normalized least mean squares algorithm (NLMS)~\cite{Bitmead,Bershad}
that adapts to the input's scale.

There exists a large body of work on this problem proposed by the machine
learning community, which clearly cannot be covered fully here. We
refer the reader to a encyclopedic book in the subject
\cite{CesaBiGa06}. Gradient descent based algorithms for regression
with the squared loss were proposed by Cesa-Bianchi
et al.~\cite{Nicolo_Warmuth} about two decades ago. These algorithms
were generalized and extended by Kivinen and Warmuth~\cite{Kiv_War} using additional
regularization functions.

An online version of the ridge regression algorithm in the worst-case
setting was proposed and analyzed by Foster~\cite{Foster91}. A related
algorithm called Aggregating Algorithm (AA) was studied by
Vovk~\cite{vovkAS}, and later applied to the problem of linear regression with square loss~\cite{Vovk01}.  The recursive least squares (RLS)~\cite{Hayes} is a
similar algorithm proposed for adaptive filtering. Both algorithms
make use of second order information, as they maintain a weight-vector
and a covariance-like positive semi-definite (PSD) matrix used to
re-weight the input. The eigenvalues of this covariance-like matrix
increase with time $t$, a property which is used to prove logarithmic
regret bounds.

The derivation of our algorithm shares similarities with the work of
Forster~\cite{Forster} and the work of Moroshko and Crammer~\cite{MoroshkoCr12}. These algorithms are motivated from the
last-step min-max predictor. While the algorithms of Forster~\cite{Forster} and Moroshko and Crammer~\cite{MoroshkoCr12} are
designed for the stationary setting, our work is primarily designed
for the non-stationary setting. Moroshko and
Crammer~\cite{MoroshkoCr12} also discussed a weak variant of the
non-stationary setting, where the complexity is measured by the total
distance from a reference vector $\bar{\mathbf{u}}$, rather than the
total distance of consecutive vectors (as in this paper), which is
more relevant to non-stationary problems. Note also that Moroshko and
Crammer~\cite{MoroshkoCr12} did not derive algorithms for the non-stationary setting, but just show a bound of the weighted min-max algorithm (designed for the stationary setting) in the weak non-stationary setting.

Our work is mostly close to a recent
algorithm~\cite{VaitsCr11} called ARCOR. This algorithm
is based on the RLS algorithm with an additional projection step, and
it controls the eigenvalues of a covariance-like matrix using scheduled resets.  The Covariance Reset RLS algorithm
(CR-RLS)~\cite{Chen,Salgado,Goodhart} 
is another example of an
algorithm that resets a covariance matrix but every fixed amount of
data points, as opposed to ARCOR that performs these resets adaptively.  All
of these algorithms 
that were designed to have numerically stable computations, perform
covariance reset from time to time. Our algorithm, LASER, is simpler
as it does not involve these steps, and it controls the increase of the
eigenvalues of the covariance matrix $\mathbf{D}$ implicitly rather
than explicitly by ``averaging'' it with a fixed diagonal matrix (see
\eqref{D}). The Kalman filter ~\cite{Kalman60} 
and the $H_\infty$
algorithm~(e.g. \cite{Simon:2006:OSE:1146304}) 
designed for filtering take a similar approach, yet the exact algebraic
form is different (\figref{algorithm:laser}
vs. \figref{algorithm:hi}).

ARCOR also controls explicitly the norm of the weight vector, which is
used for its analysis, by projecting it into a bounded set, as was
also proposed by Herbster and Warmuth~\cite{HerbsterW01}. Other approaches to control its
norm are to shrink it multiplicatively~\cite{KivinenSW01} or
by removing old
examples~\cite{CavallantiCG07}. Some of these algorithms were
designed to have sparse functions in the kernel space~(e.g. 
\cite{CrammerKS03,Dekel05theforgetron}). Note that our algorithm
LASER is simpler as it does not
perform any of these operation explicitly.
Finally, few algorithms that employ second order information
were recently proposed for
classification~\cite{CesaBianchiCoGe05,CrammerKuDr09,Crammer:2012:CLC:2343676.2343704}, 
and later in the online convex programming framework
\cite{DuchiHS10,McMahanS10}.

\section{Summary and Conclusions}
We proposed a novel algorithm for non-stationary online regression designed and
analyzed with the squared loss. The algorithm was developed from the
last-step minmax predictor for {\em non-stationary} problems, and we
showed an exact recursive form of its solution. We also described an
algorithm based on the $H_\infty$ filter, that is motivated from a
min-max approach as well, yet for filtering, and bounded its
regret. Simulations showed its superior performance in
a worst-case (close to a constant per iteration) drift.

An interesting future direction is to extend
the algorithm for general loss functions rather than the squared
loss. Currently, to implement the algorithm we need to perform either
matrix inversion or eigenvector decomposition, we like to design
a more efficient version of the algorithm. Additionally, for the
algorithm to perform well, the amount of drift $V$ or a bound over it
are used by the algorithm. An interesting direction is to design
algorithms that automatically detect the level of drift, or are
invariant to it.
\appendix
\section{Proofs}
\vspace{-0.25cm}
\subsection{Proof of \corref{cor:main}}
\vspace{-0.25cm}
\label{proof_cor_main}
\begin{proof}
Plugging \lemref{lem:bound_1} in \thmref{thm:basic_bound} we have for
all $(\vui{1} \dots \vui{T})$,
\begin{align*}
L_T(\textrm{LASER})
 &\leq  b\left\Vert
    \mathbf{u}_{1}\right\Vert ^{2}+c V+L_T(\{\vui{t}\})\\
&+Y^{2} \ln\left|\frac{1}{b}\mathbf{D}_{T}\right|+c^{-1}Y^2 \sum_{t=1}^{T} \tr\paren{\mathbf{D}_{t-1}} ~.
\end{align*}
Using \lemref{eigen_values_lemma} we bound the RHS and get
\begin{align}
L_T(\textrm{LASER})\leq 
  b\left\Vert
     \mathbf{u}_{1}\right\Vert ^{2}+L_T(\{\vui{t}\})+Y^{2} \ln\left|\frac{1}{b}\mathbf{D}_{T}\right|\nonumber\\ +c^{-1}Y^2\tr\paren{\mathbf{D}_0}+c V\nonumber\\
 +c^{-1}Y^2 T d  \max\braces{ \frac{3X^2 +
   \sqrt{X^4+4X^2 c}}{2},b+X^2}~.\nonumber
\end{align} 
%
The term $c^{-1}Y^2\tr\paren{\mathbf{D}_0}$ does not depend on $T$, because
\(
c^{-1}Y^2\tr\paren{\mathbf{D}_0}=c^{-1}Y^2d\frac{bc}{c-b}=\frac{\varepsilon}{1-\varepsilon}Y^{2}d ~.
\)
To show \eqref{bound1}, note that 
\(
V \leq T \frac{\sqrt{2}Y^2dX}{\mu^{3/2}} \Leftrightarrow \mu \leq  \paren{\frac{\sqrt{2}Y^2dXT}{V}}^{2/3} 
=c~.
\)
We thus have that $\paren{ 3X^2 +
   \sqrt{ X^4+4X^2 c }}/ {2 } \leq \paren{ 3X^2 +
   \sqrt{ 8X^2 c }}/{ 2 } \leq \sqrt{ 8X^2 c }$, and we get a bound on
 the right term of \eqref{final_cor},
\begin{align*}
 \max\braces{  \paren{ 3X^2 +
    \sqrt{ X^4+4X^2 c } }/{ 2 },b+X^2 } \leq \\
 \max\braces{  \sqrt{ 8X^2 c },b+X^2 }
\leq 2X\sqrt{ 2c } ~.
\end{align*}
%
Using this bound and plugging the value of $c$ from \eqref{c1} we bound
\eqref{final_cor} and conclude the proof, 
\begin{align*}
\paren{\frac{\sqrt{2}T Y^2 d X}{V}}^{2/3} V 
&+ Y^2 T d 2X
\sqrt{2 \paren{\frac{\sqrt{2}T Y^2 d X}{V}}^{-2/3}}\\
 &= 
3\paren{\sqrt{2}T Y^2 d X}^{2/3} V^{1/3} ~.
\end{align*}
\end{proof}



{
\bibliographystyle{abbrv}
\bibliography{bib}
}

\section{APPENDIX\\SUPPLEMENTARY MATERIAL}
\label{sec:supp_material}

\subsection{Proof of \lemref{lem:lemma11}}
\label{proof_lemma11}

\begin{proof}
We calculate
\begin{align*}
P_{t}\left(\mathbf{u}_{t}\right) 
=&  \min_{\mathbf{u}_{1},\ldots,\mathbf{u}_{t-1}} \Bigg(b\left\Vert \mathbf{u}_{1}\right\Vert ^{2}+c\sum_{s=1}^{t-1}\left\Vert \mathbf{u}_{s+1}-\mathbf{u}_{s}\right\Vert ^{2}\\
&+\sum_{s=1}^{t}\left(y_{s}-\mathbf{u}_{s}^{\top}\mathbf{x}_{s}\right)^{2}\Bigg)\\
  =&
  \min_{\mathbf{u}_{t-1}}\min_{\mathbf{u}_{1},\ldots,\mathbf{u}_{t-2}}\Bigg(b\left\Vert
    \mathbf{u}_{1}\right\Vert ^{2}+c\sum_{s=1}^{t-2}\left\Vert
    \mathbf{u}_{s+1}-\mathbf{u}_{s}\right\Vert
  ^{2}\\
&+\sum_{s=1}^{t-1}\left(y_{s}-\mathbf{u}_{s}^{\top}\mathbf{x}_{s}\right)^{2}
+c\left\Vert \mathbf{u}_{t}-\mathbf{u}_{t-1}\right\Vert ^{2}\\
&+\left(y_{t}-\mathbf{u}_{t}^{\top}\mathbf{x}_{t}\right)^{2}\Bigg)\\
   =&  \min_{\mathbf{u}_{t-1}}\Bigg(P_{t-1}\left(\mathbf{u}_{t-1}\right)+c\left\Vert \mathbf{u}_{t}-\mathbf{u}_{t-1}\right\Vert ^{2}\\
&+\left(y_{t}-\mathbf{u}_{t}^{\top}\mathbf{x}_{t}\right)^{2}\Bigg)
\end{align*}
\end{proof}

\subsection{Proof of \lemref{lem:lemma12}}
\label{proof_lemma12}
\begin{proof}
By definition, 
\(
P_{1}\left(\mathbf{u}_{1}\right)  =  Q_{1}\left(\mathbf{u}_{1}\right)
  =  b\left\Vert \mathbf{u}_{1}\right\Vert ^{2}+\left(y_{1}-\mathbf{u}_{1}^{\top}\mathbf{x}_{1}\right)^{2}
  = 
 \mathbf{u}_{1}^{\top}\left(b\mathbf{I}+\mathbf{x}_{1}\mathbf{x}_{1}^{\top}\right)\mathbf{u}_{1}-2y_{1}\mathbf{u}_{1}^{\top}\mathbf{x}_{1}+y_{1}^{2} ~,
\)
and indeed
\(
\mathbf{D}_{1}=b\mathbf{I}+\mathbf{x}_{1}\mathbf{x}_{1}^{\top}
\), 
\(
\mathbf{e}_{1}=y_{1}\mathbf{x}_{1}
\), and
\(
f_{1}=y_{1}^{2}
\).
We proceed by induction, assume that, 
\(
P_{t-1}\left(\mathbf{u}_{t-1}\right)=\mathbf{u}_{t-1}^{\top}\mathbf{D}_{t-1}\mathbf{u}_{t-1}-2\mathbf{u}_{t-1}^{\top}\mathbf{e}_{t-1}+f_{t-1}
\).
Applying \lemref{lem:lemma11} we get,
\begin{align*}
P_{t}\left(\mathbf{u}_{t}\right) 
=&  \min_{\mathbf{u}_{t-1}}\Bigg(\mathbf{u}_{t-1}^{\top}\mathbf{D}_{t-1}\mathbf{u}_{t-1}-2\mathbf{u}_{t-1}^{\top}\mathbf{e}_{t-1}+f_{t-1}\\
&+c\left\Vert \mathbf{u}_{t}-\mathbf{u}_{t-1}\right\Vert ^{2}+\left(y_{t}-\mathbf{u}_{t}^{\top}\mathbf{x}_{t}\right)^{2}\Bigg)\\
 =&  \min_{\mathbf{u}_{t-1}}\Bigg(\mathbf{u}_{t-1}^{\top}\left(c\mathbf{I}+\mathbf{D}_{t-1}\right)\mathbf{u}_{t-1}\\
&-2\mathbf{u}_{t-1}^{\top}\left(c\mathbf{u}_{t}+\mathbf{e}_{t-1}\right)+f_{t-1}+c\left\Vert \mathbf{u}_{t}\right\Vert ^{2}\\
&+\left(y_{t}-\mathbf{u}_{t}^{\top}\mathbf{x}_{t}\right)^{2}\Bigg)\\
 =&  -\left(c\mathbf{u}_{t}+\mathbf{e}_{t-1}\right)^{\top}\left(c\mathbf{I}+\mathbf{D}_{t-1}\right)^{-1}\left(c\mathbf{u}_{t}+\mathbf{e}_{t-1}\right)\\
&+f_{t-1}+c\left\Vert \mathbf{u}_{t}\right\Vert ^{2}+\left(y_{t}-\mathbf{u}_{t}^{\top}\mathbf{x}_{t}\right)^{2}\\
 =&  \mathbf{u}_{t}^{\top}\left(c\mathbf{I}+\mathbf{x}_{t}\mathbf{x}_{t}^{\top}-c^{2}\left(c\mathbf{I}+\mathbf{D}_{t-1}\right)^{-1}\right)\mathbf{u}_{t}\\
 &  
 -2\mathbf{u}_{t}^{\top}\left[c\left(c\mathbf{I}+\mathbf{D}_{t-1}\right)^{-1}\mathbf{e}_{t-1}+y_{t}\mathbf{x}_{t}\right]\\
&-\mathbf{e}_{t-1}^{\top}\left(c\mathbf{I}+\mathbf{D}_{t-1}\right)^{-1}\mathbf{e}_{t-1}+f_{t-1}+y_{t}^{2}
\end{align*}
Using Woodbury identity we continue to develop the last equation,
\begin{align*}
 = &\mathbf{u}_{t}^{\top}\left(c\mathbf{I}+\mathbf{x}_{t}\mathbf{x}_{t}^{\top}\right.\\
&\left.-c^{2}\left[c^{-1}\mathbf{I}-c^{-2}\left(\mathbf{D}_{t-1}^{-1}+c^{-1}\mathbf{I}\right)^{-1}\right]\right)\mathbf{u}_{t}\\
 &  -2\mathbf{u}_{t}^{\top}\left[\left(\mathbf{I}+c^{-1}\mathbf{D}_{t-1}\right)^{-1}\mathbf{e}_{t-1}+y_{t}\mathbf{x}_{t}\right]\\
&-\mathbf{e}_{t-1}^{\top}\left(c\mathbf{I}+\mathbf{D}_{t-1}\right)^{-1}\mathbf{e}_{t-1}+f_{t-1}+y_{t}^{2}\\
  = & \mathbf{u}_{t}^{\top}\left(\left(\mathbf{D}_{t-1}^{-1}+c^{-1}\mathbf{I}\right)^{-1}+\mathbf{x}_{t}\mathbf{x}_{t}^{\top}\right)\mathbf{u}_{t}\\
 &   -2\mathbf{u}_{t}^{\top}\left[\left(\mathbf{I}+c^{-1}\mathbf{D}_{t-1}\right)^{-1}\mathbf{e}_{t-1}+y_{t}\mathbf{x}_{t}\right]\\
&-\mathbf{e}_{t-1}^{\top}\left(c\mathbf{I}+\mathbf{D}_{t-1}\right)^{-1}\mathbf{e}_{t-1}+f_{t-1}+y_{t}^{2}~,
\end{align*}
and indeed
\(
\mathbf{D}_{t}=\left(\mathbf{D}_{t-1}^{-1}+c^{-1}\mathbf{I}\right)^{-1}+\mathbf{x}_{t}\mathbf{x}_{t}^{\top}
\), \\
\(
\mathbf{e}_{t}=\left(\mathbf{I}+c^{-1}\mathbf{D}_{t-1}\right)^{-1}\mathbf{e}_{t-1}+y_{t}\mathbf{x}_{t}
\) and,
\(
f_{t}=f_{t-1}-\mathbf{e}_{t-1}^{\top}\left(c\mathbf{I}+\mathbf{D}_{t-1}\right)^{-1}\mathbf{e}_{t-1}+y_{t}^{2}
\), as desired.
\end{proof}

\subsection{Proof of \lemref{lem:technical}}
\label{proof_lemma_technical}
\begin{proof}
We first use  the Woodbury equation to get the following two identities
\begin{align*}
\mathbf{D}_{t}^{-1}&=\left[\left(\mathbf{D}_{t-1}^{-1}+c^{-1}\mathbf{I}\right)^{-1}+\mathbf{x}_{t}\mathbf{x}_{t}^{\top}\right]^{-1}\\
&=\mathbf{D}_{t-1}^{-1}+c^{-1}\mathbf{I}\\
&-\frac{\left(\mathbf{D}_{t-1}^{-1}+c^{-1}\mathbf{I}\right)\mathbf{x}_{t}\mathbf{x}_{t}^{\top}\left(\mathbf{D}_{t-1}^{-1}+c^{-1}\mathbf{I}\right)}{1+\mathbf{x}_{t}^{\top}\left(\mathbf{D}_{t-1}^{-1}+c^{-1}\mathbf{I}\right)\mathbf{x}_{t}}\\
\end{align*}
and
\begin{align*}
\left(\mathbf{I}+c^{-1}\mathbf{D}_{t-1}\right)^{-1}&=\mathbf{I}-c^{-1}\left(\mathbf{D}_{t-1}^{-1}+c^{-1}\mathbf{I}\right)^{-1}
\end{align*}
Multiplying both identities with each other we get,
\begin{align}
 & \mathbf{D}_{t}^{-1}\left(\mathbf{I}+c^{-1}\mathbf{D}_{t-1}\right)^{-1}\nonumber\\
 =~&  \Bigg[\mathbf{D}_{t-1}^{-1}+c^{-1}\mathbf{I}\nonumber\\
&-\frac{\left(\mathbf{D}_{t-1}^{-1}+c^{-1}\mathbf{I}\right)\mathbf{x}_{t}\mathbf{x}_{t}^{\top}\left(\mathbf{D}_{t-1}^{-1}+c^{-1}\mathbf{I}\right)}{1+\mathbf{x}_{t}^{\top}\left(\mathbf{D}_{t-1}^{-1}+c^{-1}\mathbf{I}\right)\mathbf{x}_{t}}\Bigg]\Bigg[\mathbf{I}\nonumber\\
&-c^{-1}\left(\mathbf{D}_{t-1}^{-1}+c^{-1}\mathbf{I}\right)^{-1}\Bigg]\nonumber\\
 =~ & \mathbf{D}_{t-1}^{-1}-\frac{\left(\mathbf{D}_{t-1}^{-1}+c^{-1}\mathbf{I}\right)\mathbf{x}_{t}\mathbf{x}_{t}^{\top}\mathbf{D}_{t-1}^{-1}}{1+\mathbf{x}_{t}^{\top}\left(\mathbf{D}_{t-1}^{-1}+c^{-1}\mathbf{I}\right)\mathbf{x}_{t}}\label{identity1}
\end{align}
and, similarly, we multiply the identities in the other order and get,
\begin{align}
 & \left(\mathbf{I}+c^{-1}\mathbf{D}_{t-1}\right)^{-1}\mathbf{D}_{t}^{-1}
 \nonumber\\
=~& \mathbf{D}_{t-1}^{-1}-\frac{\mathbf{D}_{t-1}^{-1}\mathbf{x}_{t}\mathbf{x}_{t}^{\top}\left(\mathbf{D}_{t-1}^{-1}+c^{-1}\mathbf{I}\right)}{1+\mathbf{x}_{t}^{\top}\left(\mathbf{D}_{t-1}^{-1}+c^{-1}\mathbf{I}\right)\mathbf{x}_{t}}\label{identity2}
\end{align}

Finally, from \eqref{identity1} we get,
\begin{align*}
 &\left(\mathbf{I}+c^{-1}\mathbf{D}_{t-1}\right)^{-1}\mathbf{D}_{t}^{-1}\mathbf{x}_{t}\mathbf{x}_{t}^{\top}\mathbf{D}_{t}^{-1}\left(\mathbf{I}+c^{-1}\mathbf{D}_{t-1}\right)^{-1}\\
&-\mathbf{D}_{t-1}^{-1}\\
&+\left(\mathbf{I}+c^{-1}\mathbf{D}_{t-1}\right)^{-1}\left[\mathbf{D}_{t}^{-1}\left(\mathbf{I}+c^{-1}\mathbf{D}_{t-1}\right)^{-1}\right.\\
&\left.+c^{-1}\mathbf{I}\right]\\
 =~&
 \left(\mathbf{I}+c^{-1}\mathbf{D}_{t-1}\right)^{-1}\mathbf{D}_{t}^{-1}\mathbf{x}_{t}\mathbf{x}_{t}^{\top}\mathbf{D}_{t}^{-1}\left(\mathbf{I}+c^{-1}\mathbf{D}_{t-1}\right)^{-1}\\
&-\mathbf{D}_{t-1}^{-1}\\
&+\left[\mathbf{I}-c^{-1}\left(\mathbf{D}_{t-1}^{-1}+c^{-1}\mathbf{I}\right)^{-1}\right]\Bigg[\mathbf{D}_{t-1}^{-1}+c^{-1}\mathbf{I}\\
&-\frac{\left(\mathbf{D}_{t-1}^{-1}+c^{-1}\mathbf{I}\right)\mathbf{x}_{t}\mathbf{x}_{t}^{\top}\mathbf{D}_{t-1}^{-1}}{1+\mathbf{x}_{t}^{\top}\left(\mathbf{D}_{t-1}^{-1}+c^{-1}\mathbf{I}\right)\mathbf{x}_{t}}\Bigg]
\end{align*}
We 
develop the last equality and use \eqref{identity1} and
\eqref{identity2} in the second equality below,
\begin{align*}
 =~ &
 \left(\mathbf{I}+c^{-1}\mathbf{D}_{t-1}\right)^{-1}\mathbf{D}_{t}^{-1}\mathbf{x}_{t}\mathbf{x}_{t}^{\top}\mathbf{D}_{t}^{-1}\left(\mathbf{I}+c^{-1}\mathbf{D}_{t-1}\right)^{-1}\\
&-\mathbf{D}_{t-1}^{-1}+\mathbf{D}_{t-1}^{-1}-\frac{\mathbf{D}_{t-1}^{-1}\mathbf{x}_{t}\mathbf{x}_{t}^{\top}\mathbf{D}_{t-1}^{-1}}{1+\mathbf{x}_{t}^{\top}\left(\mathbf{D}_{t-1}^{-1}+c^{-1}\mathbf{I}\right)\mathbf{x}_{t}}\\
 =~ &
 \left[\mathbf{D}_{t-1}^{-1}-\frac{\mathbf{D}_{t-1}^{-1}\mathbf{x}_{t}\mathbf{x}_{t}^{\top}\left(\mathbf{D}_{t-1}^{-1}+c^{-1}\mathbf{I}\right)}{1+\mathbf{x}_{t}^{\top}\left(\mathbf{D}_{t-1}^{-1}+c^{-1}\mathbf{I}\right)\mathbf{x}_{t}}\right]\mathbf{x}_{t}\mathbf{x}_{t}^{\top}\\
&\left[\mathbf{D}_{t-1}^{-1}-\frac{\left(\mathbf{D}_{t-1}^{-1}+c^{-1}\mathbf{I}\right)\mathbf{x}_{t}\mathbf{x}_{t}^{\top}\mathbf{D}_{t-1}^{-1}}{1+\mathbf{x}_{t}^{\top}\left(\mathbf{D}_{t-1}^{-1}+c^{-1}\mathbf{I}\right)\mathbf{x}_{t}}\right]\\
&-\frac{\mathbf{D}_{t-1}^{-1}\mathbf{x}_{t}\mathbf{x}_{t}^{\top}\mathbf{D}_{t-1}^{-1}}{1+\mathbf{x}_{t}^{\top}\left(\mathbf{D}_{t-1}^{-1}+c^{-1}\mathbf{I}\right)\mathbf{x}_{t}}
 \\
=~& 
-\frac{\mathbf{x}_{t}^{\top}\left(\mathbf{D}_{t-1}^{-1}+c^{-1}\mathbf{I}\right)\mathbf{x}_{t}\mathbf{D}_{t-1}^{-1}\mathbf{x}_{t}\mathbf{x}_{t}^{\top}\mathbf{D}_{t-1}^{-1}}{\left(1+\mathbf{x}_{t}^{\top}\left(\mathbf{D}_{t-1}^{-1}+c^{-1}\mathbf{I}\right)\mathbf{x}_{t}\right)^{2}}~~\preceq~~0
\end{align*}
\end{proof}

\subsection{Derivations for \thmref{thm:basic_bound}}
\label{algebraic_manipulation}
\begin{align*}
 &  \left(y_{t}-\hat{y}_{t}\right)^{2}+\min_{\mathbf{u}_{1},\ldots,\mathbf{u}_{t-1}}Q_{t-1}\left(\mathbf{u}_{1},\ldots,\mathbf{u}_{t-1}\right)\\
&-\min_{\mathbf{u}_{1},\ldots,\mathbf{u}_{t}}Q_{t}\left(\mathbf{u}_{1},\ldots,\mathbf{u}_{t}\right)\\
 =&  \left(y_{t}-\hat{y}_{t}\right)^{2}-\mathbf{e}_{t-1}^{\top}\mathbf{D}_{t-1}^{-1}\mathbf{e}_{t-1}+f_{t-1}+\mathbf{e}_{t}^{\top}\mathbf{D}_{t}^{-1}\mathbf{e}_{t}-f_{t}\\
 =&  \left(y_{t}-\hat{y}_{t}\right)^{2}-\mathbf{e}_{t-1}^{\top}\mathbf{D}_{t-1}^{-1}\mathbf{e}_{t-1}\\
&+\mathbf{e}_{t-1}^{\top}\left(c\mathbf{I}+\mathbf{D}_{t-1}\right)^{-1}\mathbf{e}_{t-1}-y_{t}^{2}\\
 &  
 +\left(\left(\mathbf{I}+c^{-1}\mathbf{D}_{t-1}\right)^{-1}\mathbf{e}_{t-1}+y_{t}\mathbf{x}_{t}\right)^{\top}\mathbf{D}_{t}^{-1}\\
&\left(\left(\mathbf{I}+c^{-1}\mathbf{D}_{t-1}\right)^{-1}\mathbf{e}_{t-1}+y_{t}\mathbf{x}_{t}\right) 
\end{align*}
where the last equality follows \eqref{e}. We proceed to develop the
last equality,
\begin{align*}
=&  \left(y_{t}-\hat{y}_{t}\right)^{2}-\mathbf{e}_{t-1}^{\top}\mathbf{D}_{t-1}^{-1}\mathbf{e}_{t-1}\\
&+\mathbf{e}_{t-1}^{\top}\left(c\mathbf{I}+\mathbf{D}_{t-1}\right)^{-1}\mathbf{e}_{t-1}-y_{t}^{2}\\
& +\mathbf{e}_{t-1}^{\top}\left(\mathbf{I}+c^{-1}\mathbf{D}_{t-1}\right)^{-1}\mathbf{D}_{t}^{-1}\left(\mathbf{I}+c^{-1}\mathbf{D}_{t-1}\right)^{-1}\mathbf{e}_{t-1}\\
&+2y_{t}\mathbf{x}_{t}^{\top}\mathbf{D}_{t}^{-1}\left(\mathbf{I}+c^{-1}\mathbf{D}_{t-1}\right)^{-1}\mathbf{e}_{t-1}+y_{t}^{2}\mathbf{x}_{t}^{\top}\mathbf{D}_{t}^{-1}\mathbf{x}_{t}\\
 =&   \left(y_{t}-\hat{y}_{t}\right)^{2}+\mathbf{e}_{t-1}^{\top}\Bigg(-\mathbf{D}_{t-1}^{-1}+\\
&\left(\mathbf{I}+c^{-1}\mathbf{D}_{t-1}\right)^{-1}\left[\mathbf{D}_{t}^{-1}\left(\mathbf{I}+c^{-1}\mathbf{D}_{t-1}\right)^{-1}\right.\\
&\left.+c^{-1}\mathbf{I}\right]\Bigg)\mathbf{e}_{t-1}+2y_{t}\mathbf{x}_{t}^{\top}\mathbf{D}_{t}^{-1}\left(\mathbf{I}+c^{-1}\mathbf{D}_{t-1}\right)^{-1}\mathbf{e}_{t-1}\\
&+y_{t}^{2}\mathbf{x}_{t}^{\top}\mathbf{D}_{t}^{-1}\mathbf{x}_{t}-y_{t}^{2}~.
\end{align*}

\subsection{Details for the bound \eqref{high_drift}}
\label{details_for_second_bound}
To show the bound \eqref{high_drift}, note that,
\(
V \geq T \frac{Y^2dM}{\mu^{2}} \Leftrightarrow \mu \geq \sqrt{ \frac{TY^2dM}{V}}=c~.
\)
We thus have that the right term of \eqref{final_cor} is upper bounded
as follows,
\begin{align*}
&\max\braces{ \frac{3X^2 +
   \sqrt{X^4+4X^2 c}}{2},b+X^2} \\
\leq& 
\max\braces{ 3X^2,\sqrt{X^4+4X^2 c},b+X^2} \\
\leq& 
\max\braces{ 3X^2,\sqrt{2}X^2 ,\sqrt{8X^2 c},b+X^2}\\ 
 =& \sqrt{8X^2}
 \max\braces{ \frac{3X^2}{\sqrt{8X^2}},\sqrt{
     c},\frac{b+X^2}{\sqrt{8X^2}}} \\
 =& \sqrt{8X^2}
 \sqrt{\max\braces{ \frac{(3X^2)^2}{{8X^2}},
     c,\frac{\paren{b+X^2}^2}{{8X^2}}}} \\
=& \sqrt{8X^2}
\sqrt{\max\braces{ \mu,c}}
\leq \sqrt{8X^2} \sqrt{\mu} = M ~.
\end{align*}
Using this bound and plugging $c=
 \sqrt{{Y^2dMT}/{V}}$ we bound
\eqref{final_cor}, 
\(
\sqrt{\frac{Y^2dMT}{V}}V + \frac{1}{\sqrt{\frac{Y^2dMT}{V}}} TdY^2M =
2\sqrt{Y^2dMTV} ~.
\)

\subsection{Proof of \lemref{operator_scalar}}
\label{proof_operator_scalar}
\begin{proof}
For the first property of the lemma we have that
 $f(\lambda) = {\lambda \beta}/\paren{\lambda+ \beta} + x^2 \leq \beta\times 1 + x^2$.
The second property follows from the symmetry between $\beta$ and
$\lambda$.
To prove the third property we decompose the function as, 
\(
f(\lambda) = \lambda - \frac{\lambda^2 }{\lambda+ \beta} + x^2 
\). 
Therefore, the function is bounded by its argument $f(\lambda)\leq
\lambda$ if, and only if, $- \frac{\lambda^2 }{\lambda+ \beta} + x^2
\leq 0$.
Since we assume $x^2\leq\gamma^2$, the last inequality holds if,
\(
-\lambda^2 + \gamma^2 \lambda + \gamma^2\beta \leq 0
\),
which holds for $\lambda \geq \frac{\gamma^2 +
  \sqrt{\gamma^4+4\gamma^2\beta}}{2}$. 

To conclude. If $\lambda \geq \frac{\gamma^2 +
  \sqrt{\gamma^4+4\gamma^2\beta}}{2}$, then $f(\lambda) \leq \lambda$. 
Otherwise, by the second property, we have, $f(\lambda) \leq \lambda+\gamma^2
\leq \frac{\gamma^2 +
  \sqrt{\gamma^4+4\gamma^2\beta}}{2} + \gamma^2 = \frac{3\gamma^2 +
  \sqrt{\gamma^4+4\gamma^2\beta}}{2}$, as required.
\end{proof}

\end{document}